\newtheorem{theorem}{Theorem}
\newtheorem{remark}{Remark}
\newtheorem{proposition}{Proposition}
\title{\LARGE \bf
Heuristic Predictive Control for Multi-Robot Flocking in \\ Congested Environments
}
\author{Guobin Zhu\textsuperscript{1,2}, Qingrui Zhang\textsuperscript{1}, Bo Zhu\textsuperscript{1,3}, Tianjiang Hu\textsuperscript{1} 
\thanks{This work is supported by the National Nature Science Foundation of China under Grant 62103451, 62373386, Guang Dong Basic and Applied Basic Research Foundation  under Grant 2024A1515012408, and Shenzhen Science and Technology Program JCYJ20220530145209021. (Corresponding author: Qingrui Zhang)}
\thanks{$^{1}$School of Aeronautics and Astronautics, Shenzhen Campus of Sun Yat-sen University, Shenzhen 518107, China}
\thanks{$^{2}$School of Automation Science and Electrical Engineering, Beihang University, Beijing 100000, China.}
\thanks{$^{3}$Center for Advanced Control and Smart Operations (CACSO), Nanjing University, Suzhou  215163, China.}
}
\begin{document}
\bstctlcite{IEEEexample:BSTcontrol}
	\maketitle
	\thispagestyle{empty}
	\pagestyle{empty}
	
\begin{abstract}
Multi-robot flocking possesses extraordinary advantages over a single-robot system in diverse domains, but it is challenging to ensure safe and optimal performance in congested environments. Hence, this paper is focused on the investigation of distributed optimal flocking control for multiple robots in crowded environments. A heuristic predictive control solution is proposed based on a Gibbs Random Field (GRF), in which bio-inspired potential functions are used to characterize robot-robot and robot-environment interactions. The optimal solution is obtained by maximizing a posteriori joint distribution of the GRF in a certain future time instant. A gradient-based heuristic solution is developed, which could significantly speed up the computation of the optimal control. Mathematical analysis is also conducted to show the validity of the heuristic solution. Multiple collision risk levels are designed to improve the collision avoidance performance of robots in dynamic environments. The proposed heuristic predictive control is evaluated comprehensively from multiple perspectives based on different metrics in a challenging simulation environment. The competence of the proposed algorithm is validated via the comparison with the non-heuristic predictive control and two existing popular flocking control methods. Real-life experiments are also performed using four quadrotor UAVs to further demonstrate the efficiency of the proposed design.
\end{abstract}

\begin{IEEEkeywords}
    Multi-robot flocking, model predictive control, Gibbs random field, artificial potential field, collision avoidance
\end{IEEEkeywords}

\section{Introduction}
\IEEEPARstart{I}{n} nature, gregarious animals, such as birds, fish, and insects, exhibit impressive collective behaviors to perform complicated tasks \cite{bonabeau1999swarm,8686188,zhang2011spill,zhang2017aerodynamics}. Among different collective behaviors, flocking has intrigued researchers for a long time due to its great potential in diverse applications, such as search and rescue, region monitoring, and package transportation, \emph{etc.} \cite{6732930, Hu2021TRO, zhang2021robust, 9914633}. It is renowned that the flocking behaviors of gregarious animals are distributed, flexible, scalable, and computationally parsimonious. Such properties are fascinating for the coordination of multiple robots with limited on-board sensing, communication, and computation capabilities. Hence, tremendous endeavors have been made to transfer the flocking capabilities of living beings to robots since the seminal work by Reynolds \cite{reynolds1987flocks}. However, it is still challenging to establish efficient distributed flocking behaviors for collective robots merely using local interactions and on-board computation capabilities, especially in congested environments. 
\begin{figure}[tbp]
    \centering
    \includegraphics[width = 1\linewidth]{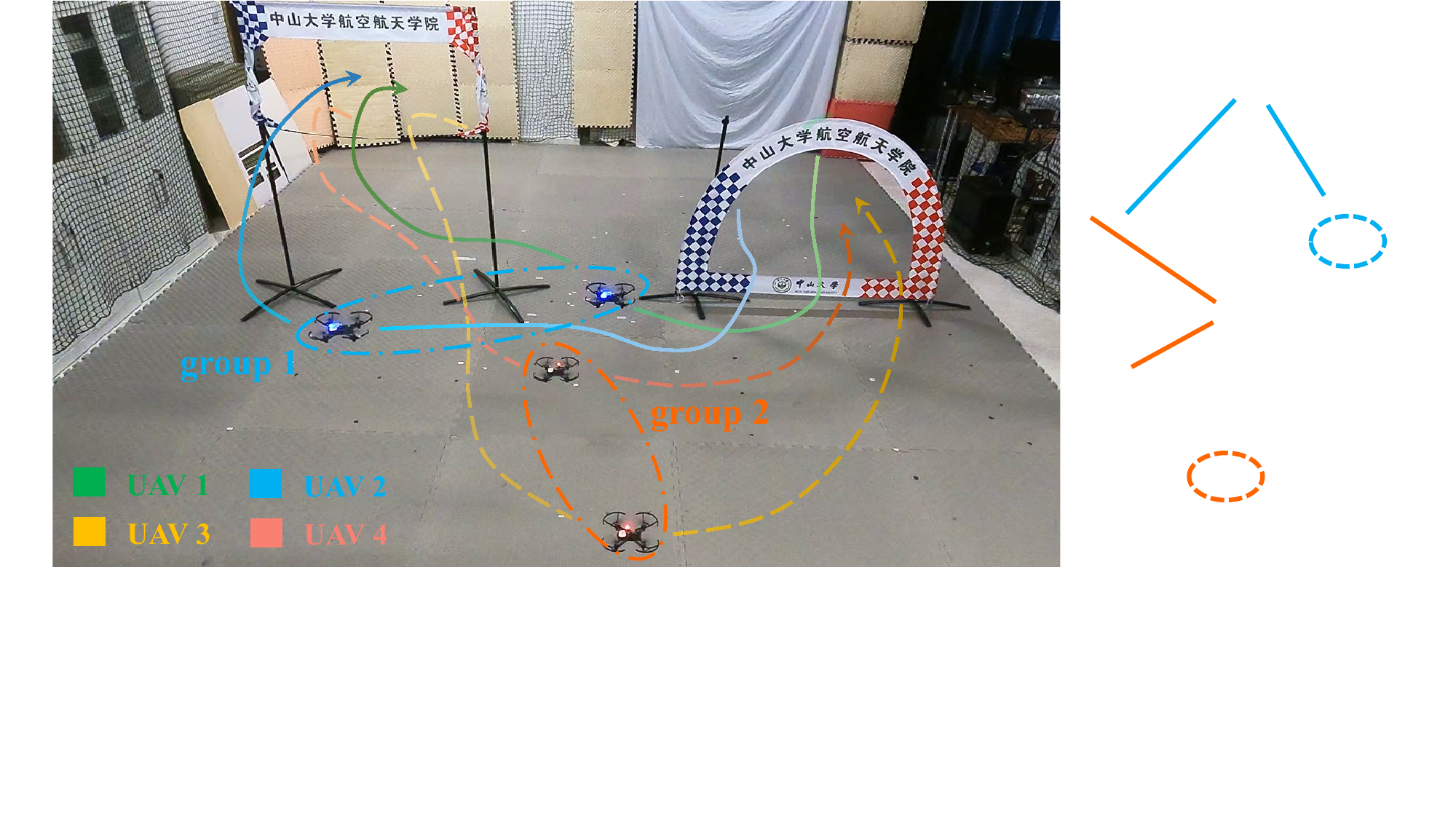}
    \caption{Two UAV groups flying seperately through openings. UAVs 1 and 2 belong to one group, while UAVs 3 and 4 belong to the other group.}
    \label{fig:real_trajectory}
\end{figure}

In the seminal work by Reynolds \cite{reynolds1987flocks}, three heuristic rules -- namely, \emph{separation}, \emph{alignment}, and \emph{cohesion} -- were proposed to mimic bird flocking. Since then, numerous flocking control methods have been developed, including artificial potential fields (APFs) \cite{6876179,olfati2006flocking,vasarhelyi2018optimized,zhan2013flocking,7490388,hu2020distributed} and optimization-based approaches \cite{6293885,6853439,yuan2017outdoor}. APF methods use virtual repulsive and attractive forces derived from potential field gradients to guide robot movements. These methods are heuristic and distributed, leading to low computational complexity \cite{bennet2010distributed}. However, APF-based methods often result in unnatural and oscillatory movements, particularly in cluttered environments \cite{soria2021Nature}. They require extensive tuning to achieve desirable flocking behaviors, such as maintaining inter-robot distances, avoiding collisions, and aligning motions \cite{bennet2010distributed}. Moreover, APF methods cannot explicitly handle physical constraints.

In comparison to APFs, optimization-based methods have demonstrated their extraordinary promise in improving flocking performance (\emph{e.g.}, the convergence rate and motion smoothness) and handling diverse constraints (\emph{e.g.}, physical limitations and environment restrictions) \cite{zhang2015model,7574368}. One of the most common optimization-based flocking control methods is model predictive control (MPC) which solves an online optimization problem recursively based on predicted trajectories by a mathematical model. The recursive online optimization of MPC is resolved in a centralized or distributed fashion \cite{9662427,zhang2015model,8429104}. The centralized MPC requires a data-processing center that calculates the control inputs for each robot based on global information, so it lacks reliability, flexibility, and robustness \cite{6293885}. Unlike centralized MPC, distributed MPC calculates the control inputs for each robot separately using an on-board processor and local state information \cite{8877998,shi2021advanced}. Hence, distributed MPC is scalable for a large-scale flock and robust against unexpected communication link failures \cite{8877998,shi2021advanced}. 
However, existing distributed MPC is computationally expensive due to the need to minimize an accumulated cost function over a prediction horizon. Additionally, it requires sequential negotiation of control decisions among robots, demanding high communication bandwidth for real-time applications.

This paper proposes an efficient distributed predictive control algorithm for multi-robot flocking in dynamic, challenging environments. The robot swarm is characterized by a Gibbs Random Field (GRF), where the joint probability of random variables at different sites is proportional to potential energy \cite{koller2009probabilistic}. Such spatially correlated probabilities are used to describe the behavioral rules of robots in flocking, with robots moving at discrete times and their coordinates serving as mobile sites of the random field \cite{xi2006gibbs}.
The robot-robot and robot-environment interactions are characterized using potential energies defined by APFs in GRF \cite{tan2010decentralized}. The objective of flocking control in GRF turns into the inference of the best control via the minimization of the potential energies inside a neighborhood \cite{rezeck2021cooperative,fernando2021online}. The minimization problem is resolved online in a recursive fashion similar to MPC. 
Thus, GRF provides a compelling framework that leverages the benefits of APFs and optimization-based methods.

It should be noted that the converged flocking by the best control also represents the maximum a posteriori (MAP) distribution of the GRF. 
Similar to the method in \cite{fernando2021online}, the posterior distribution is approximated in a distributed manner using predicted positions of neighboring robots. Thus, the GRF-based predictive control optimization in this paper relies on the configuration of robots at a future time instant rather than using trajectories over a horizon. The control space is discretized for fast online trajectory optimization, similar to the dynamic window approach \cite{580977}, making it more computationally efficient than conventional MPC. However, our simulations and experiments show that the original GRF-based flocking control as in \cite{xi2006gibbs,tan2010decentralized,rezeck2021cooperative,fernando2021online} is not efficient enough for a large-scale flock from both computation cost and collision avoidance perspectives. High performance requires fine control discretization in the GRF-based optimization, thus leading to the increases of computational burden. This dilemma is resolved in this paper by introducing the APF-based inputs as the heuristic initial guess, given that the major potential energies of the GRF-based control are defined by APFs. Theoretical analysis is presented to illustrate the validity of such a choice. The collision avoidance performance is improved by defining multi-level risky areas with the consideration of the relative motion between two non-cooperative robots. To demonstrate the efficiency of the proposed design, we design an evaluation scenario, where two independent flocks of robots go through multiple openings as shown in Fig. \ref{fig:real_trajectory}. Such a scenario is adversary, competitive, and congested for each group of robots. Both simulation and real-life experiments are performed to thoroughly validate the whole design.\footnote{\href{https://drive.google.com/file/d/1E-bzEHQ-o1WAXZUTxUgXYtRg30VAxhCR/view}{\emph{Link of simulation and experiment videos.}}} In summary, the overall contributions of this paper are threefold:
\begin{itemize}
  \item [1)]
  A heuristic predictive flocking control algorithm is developed for multiple robots in dynamic challenging environments. It employs the gradient of APFs as a heuristic initial guess for the computation of the optimal control. Such a heuristic design would dramatically reduce the computation cost.
  \item [2)]
  Multiple collision risk levels are designed based on the relative motion between the ego robot and a non-cooperative robot (\emph{e.g.}, either an intelligent robot in a different group or a moving/static obstacle). With the subtle discrimination of different risky levels, collision avoidance performance can be improved significantly.
  \item [3)]
  A rigorous theoretical analysis is provided to illustrate the validity of using the gradient of APFs as the initial input guess for the optimization process of the GRF-based control. We also mathematically illustrate the convergence of the distributed approximation of the maximum a posteriori (MAP) distribution of the GRF.
\end{itemize}

The rest of the article is organized as follows. The system model and Gibbs random fields are described in Section \ref{sec:Preliminaries}. The whole efficient control design is then proposed in Section \ref{sec:Methodology}. Following that, simulations and real-world experiments are presented in Section \ref{sec:Experiments}. Finally, a summary of this work is given in Section \ref{sec:Conclusion}.

\section{Preliminaries \label{sec:Preliminaries}} 
\subsection{System Modeling} \label{subsec:System Modeling}
A set of $N$ homogeneous robots are considered in this paper. The robot set is specified as $\mathcal{A}=\{1,\ldots, N\}$. Let the state vector of the robot $i \in \mathcal{A}$ be $\mathbf{x}_i = [\mathbf{p}_i^{T}, \mathbf{v}_i^{T}]^{T} \in \mathbb{R}^{6}$, where $\mathbf{p}_i\in \mathbb{R}^{3}$ and $\mathbf{v}_i\in \mathbb{R}^{3}$ are the position and velocity vectors in the inertial frame, respectively. In most flocking models, a robot is assumed to have second-order dynamics with its motion updated by changing the acceleration. Hence, the following discrete-time robot model is employed.
\begin{equation}\label{eq:state-space model}
    \mathbf{x}_i(t_k + \triangle t) = \mathbf{A}\mathbf{x}_i(t_k) + \mathbf{B}\mathbf{u}_i(t_k) 
\end{equation}
where $\mathbf{u}_i(t_k)$ is the control input vector, $\triangle t$ is the step size, $\mathbf{A} = \left[\begin{array}{cc}1, \;& \triangle t \\ 0, \;& 1 \end{array} \right] \otimes \mathbf{I}_{3}, \mathbf{B} = [\triangle t^2/2, \; \triangle t ]^T \otimes \mathbf{I}_{3}$, $\otimes$ is the Kronecker product, and $t_k$ denotes the $k$-th time instant. Both the velocity and acceleration of robot $i$ are bounded with $\Vert \mathbf{v}_i \Vert \leq v_{max}$, $\Vert \mathbf{u}_i \Vert \leq u_{max}$, and $v_{max}$, $u_{max}>0$. 

The visibility or interaction sphere of robot $i$ is defined as a region surrounded by a ball with a constant radius $r_s$, also called the perception radius for a robot in flocking. The neighborhood of robot $i$ is chosen to be the set of neighboring robots within its interaction sphere. Mathematically, the neighborhood of robot $i$ is denoted as $\mathcal{N}_i = \{j \in \mathcal{A} \vert \Vert \mathbf{p}_{ji} \Vert \leq r_s\text{, } j \neq i\}$, where $\mathbf{p}_{ji} = \mathbf{p}_j - \mathbf{p}_i$ is the relative position of robot $j$ to $i$. It is assumed that the interaction between any two robots is bi-directional, so there exists $j \in\mathcal{N}_i \Leftrightarrow i \in \mathcal{N}_j$. 
The interactions among robots in flocking are, therefore, characterized by an undirected graph $\mathcal{G}(\mathcal{A},\mathcal{E})$, where $\mathcal{A}$ is the robot set and $\mathcal{E} = \{(i,j) \in \mathcal{A} \times \mathcal{A}\vert j \in \mathcal{N}_i\text{, } \forall i\in \mathcal{A} \}$ is the set of interaction edges. The neighbor set $\mathcal{N}_i$ depends on the relative positions among robots, so $\mathcal{G}$ is dynamically changing.

The virtual robot concept in \cite{olfati2006flocking}, denoted as a $\beta$-robot, is borrowed to characterize static obstacles in the environment. The $\beta$-robot denotes the closest point to the robot $i$ on the surface of an obstacle, so it has no size. The set of static obstacles perceived by robot $i$ are specified as $\mathcal{O}_{s,i} = \{\beta\in\mathcal{O}_{s} \vert \Vert \mathbf{p}_{\beta}-\mathbf{p}_{i} \Vert \leq r_s \}$, where $\mathcal{O}_{s}$ is the set of all static obstacles and $\mathbf{p}_{\beta}$ is the position of the $\beta$-robot \cite{olfati2006flocking}. With the $\beta$-robot concept, both regular and irregular static obstacles can be easily addressed in the same framework. Similar concepts can also be found in \cite{cole2018reactive}. The dynamic obstacles, which are mostly non-cooperative robots, are also called $\beta$-robot with a little abuse of notations. However, all dynamic obstacles are assumed to be enclosed by a ball with a radius $r_\beta$.
The dynamic obstacle set observed by robot $i$ is denoted by $\mathcal{O}_{d,i} = \{\beta \in\mathcal{O}_{d}\vert \Vert \mathbf{p}_{\beta}-\mathbf{p}_{i} \Vert \leq r_s + r_\beta \}$ where $\mathcal{O}_{d}$ is the set of all dynamic obstacles and $\mathbf{p}_{\beta}$ is the position of the dynamic obstacle $d$. Hence, the set of all perceived obstacles by robot $i$ is given by $\mathcal{O}_{i}=\mathcal{O}_{s,i}	\cup \mathcal{O}_{d,i}$.

To be more realistic, we assume that robots have sizes, thus the safety conditions need to be considered in the robot swarm. Make a conservative assumption that the robot can be represented by a ball of radius $r_c$, the safety constraint for robot $i$ may then be formulated as $\Vert \mathbf{p}_{ij} \Vert > 2r_c$, $j \in \mathcal{N}_i$; $\Vert \mathbf{p}_{i\beta} \Vert > r_c$, $\beta \in \mathcal{O}_{s,i}$; $\Vert \mathbf{p}_{i\beta} \Vert > r_c + r_\beta$, $\beta \in \mathcal{O}_{d,i}$. 

\subsection{Gibbs Random Field} \label{subsec:Gibbs Random Field}
Gibbs random field (GRF) is a set of random variables with a spatial Markov property and strictly positive joint probability density. The spatial Markov property allows one to ignore more distant information as soon as immediate local information is provided \cite{rezeck2021flocking}. In GRF, the joint probability of random variables is characterized as functions over cliques in an undirected graph. Consider the undirected graph  $\mathcal{G}(\mathcal{A},\mathcal{E})$ as given in Subsection \ref{subsec:System Modeling}. A node subset $\mathcal{Q} \subseteq \mathcal{A}$ is called a clique if all elements in $\mathcal{Q}$ are neighbors to one another. A GRF on $\mathcal{G}$ is, therefore, a collection of random variables $X = \{X_i\}_{i\in\mathcal{A}}$ indexed by $\mathcal{A}$, where $X_i$ is the spatial site of the $i$-th random variable in $\mathcal{A}$. Hence, an instance of $X$ is a spatial configuration of the GRF, which should obey the following Gibbs distribution. 
\begin{equation}
    p(X) = Z^{-1}\prod _{\mathcal{Q} \in \mathcal{C}} \varPsi _{\mathcal{Q}} (X_{\mathcal{Q}}) \label{eq:GibbsDistr}
\end{equation}
where $\mathcal{C}$ is the set of cliques on the graph $\mathcal{G}$, $X_{\mathcal{Q}}$ denotes random variables over $\mathcal{Q}$, $\varPsi _{\mathcal{Q}}$ is a non-negative clique potential, and $Z = \sum _{X} \prod _{\mathcal{Q} \in \mathcal{C}} \varPsi _{\mathcal{Q}} (X_{\mathcal{Q}})$ is a normalization factor. 

The clique potential $\varPsi _{\mathcal{Q}}$ has an exponential form of $\varPsi _{\mathcal{Q}}=\exp (-\psi _{\mathcal{Q}}(X_{\mathcal{Q}}))$ with $\psi _{\mathcal{Q}}(X_{\mathcal{Q}})$ representing the potential energy of the spatial configuration of $X_{\mathcal{Q}}$. In flocking, such a configuration of $X_{\mathcal{Q}}$ corresponds to the position configuration of neighboring robots. Minimum potential energy corresponds to the optimal or stable flocking behaviors of robots. Hence, GRF provides a powerful framework to characterize interactions among neighboring robots. As can be seen, the joint probability is inversely proportional to the potential energy function $\psi _{\mathcal{Q}}(X_{\mathcal{Q}})$. Hence, minimizing potential energy functions $\psi _{\mathcal{Q}}(X_{\mathcal{Q}})$ is equivalent to obtaining the maximum a posteriori possibility (MAP) of the random variables $X$ on a GRF. The predictive control in this paper is designed to minimize the potential energy of the spatial configurations of robot flocks at a future time instant. The potential energy functions $\psi _{\mathcal{Q}}(X_{\mathcal{Q}})$ is defined by the combination of interaction potentials for flocking, collision avoidance potentials, and control performance potentials, \emph{etc}.    

\section{Methodology}\label{sec:Methodology}
In this section, an efficient distributed predictive control strategy is developed for a group of robots in congested environments. A GRF is employed to characterize the spatial correlation or local interactions among UAVs. The optimal control is inferred by maximizing the posterior distribution of the GRF, which corresponds to the minimization of the total potential energy of the multi-robot configuration. Hence, different potential energy components are firstly designed in this section, including the robot interaction energy, collision avoidance energy, and control performance energy, \emph{etc.} The GRF is thereafter constructed based on the designed potential energies according to Section \ref{subsec:Gibbs Random Field}. A distributed iterative algorithm is developed to approximate the MAP of the GRF.
Eventually, the optimal input for each robot is efficiently calculated via a biased discretization of the control space.

\subsection{Potential Energies}\label{subsec:Potential Energies}
In this paper, a group of robots is expected to move safely along a reference trajectory as a flock in challenging environments. The safe motion is generated by combining various collective behaviors such as flock cohesiveness, inter-robot separation, collision avoidance, motion smoothness, and goal tracking. A set of potential energies are, therefore, introduced to characterize the aforementioned flocking behaviors. More details on these potential energies are provided below.

1) \emph{Inter-robot potential energy}: This energy is used to achieve the simplest collective behavior in an opening environment, such as the cohesion of the swarm on large scales and separation at a close range, which is also called the robot-robot interaction potential energy. The inter-robot potential energy between any robot $i$ and $j$ is, therefore, defined by 
\begin{equation}\label{eq:Attraction and repulsion}
    \psi _{ar} (\mathbf{x}_i, \mathbf{x}_j) = 
    \begin{cases}
        k_a\left(1 + \cos{\left (\frac{\pi\Vert\mathbf{p}_{ij}\Vert}{r_f} \right )} \right),& \text{if} \ \Vert\mathbf{p}_{ij}\Vert \leq k_tr_f \\ \\
        \begin{aligned}
         & -k_a\frac{\pi}{r_f}\sin{(\pi k_t)}\\ 
         &  \times\left(\Vert\mathbf{p}_{ij}\Vert- k_tr_f\right) \\ 
         &  +k_a\left( 1 + \cos{(k_t\pi) } \right)
        \end{aligned}
        ,& \text{otherwise.}
    \end{cases}
\end{equation}
where $k_a, r_f > 0$, $0 < k_t \leq 2$ and $i,j \in \mathcal{A}$, $i \neq j$. At the characteristic distance $r_f$, the minimum inter-robot potential energy can be obtained as shown in Fig. \ref{fig:sketch_map}(a). By properly increasing the pairwise gain $k_a$, the inclination of the wall in Fig. \ref{fig:sketch_map}(a) can be enhanced, which promotes the maintenance of $r_f$. Note the gradient of the inter-robot potential energy function \eqref{eq:Attraction and repulsion} will change smoothly with respect to the relative distance between two robots. Such smooth changes are more beneficial to flocking control than the potential energy in \cite{7434002}. 
\begin{figure}[tbp]
    \centering
    \includegraphics[width = 1\linewidth]{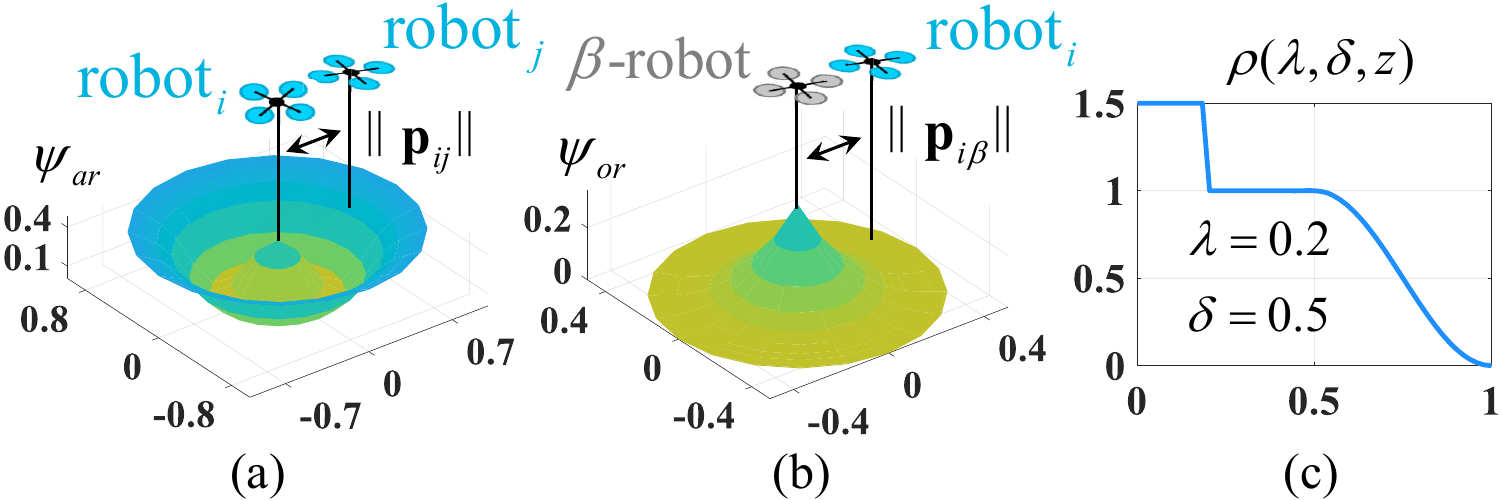}
    \caption{Inter-robot potential energy (a), repulsion energy (b), and transition function (c). In (a), $r_f = 0.421, k_a = 0.2, k_t = 1.95$; in (b), $r_f = 0.421, k_{or} = 0.2$; in (c), $k_{\rho} = 1.5$.}
    \label{fig:sketch_map}
\end{figure}

2) \emph{Obstacle avoidance potential energy}: This energy is employed to enable robots to pass obstacle-rich environments securely and efficiently as illustrated in Fig. \ref{fig:Obstacle_avoidance}. The obstacle avoidance action is designed based on the relative position and velocity of robot $i$ to an obstacle or the $\beta$-robot, namely $\mathbf{p}_{i\beta}$, $\mathbf{v}_{i\beta}$. Let $\theta _{i\beta} = \arccos\left(-\mathbf{p}_{i\beta} \cdot \mathbf{v}_{i\beta}/\left(\Vert \mathbf{p}_{i\beta}\Vert \Vert \mathbf{v}_{i\beta} \Vert\right)\right)$. As shown in Fig. \ref{fig:Obstacle_avoidance}, there is almost no collision if $\theta _{i\beta} \geq \theta _{\mathrm{III}}$, so no avoidance action is required.  If $\theta _{i\beta} < \theta _{\mathrm{III}},r_f < \Vert \mathbf{p}_{i\beta} \Vert < r_s$, a collision is possible, so the robot $i$ has to execute a maneuver to avoid the obstacle. At this stage, the robot is required to move along $\mathbf{v}_{ob}$ calculated by equation (\ref{eq:heuristic obstacle avoidance}). This behavior is described by a direction potential $\psi _{od}$ (\ref{eq:obstacle avoidance}). If $\theta _{i\beta} < \theta _{\mathrm{III}},\Vert \mathbf{p}_{i\beta} \Vert \leq r_f$, the robot $i$ would be subject to an extra repulsive force by $\psi _{or}$.
\begin{equation}\label{eq:Obstacle repulsion}
    \psi _{or}(\mathbf{x}_i,\mathbf{x}_{\beta}) = k_{or}\left( \exp{\left(1 - \sin{\left (\frac{\pi \Vert\mathbf{p}_{i\beta}\Vert}{2r_f}\right )} \right)} - 1\right)
\end{equation}
where $k_{or} > 0$, $i \in \mathcal{A}$ and $\mathbf{x}_{\beta} = [\mathbf{p}_{\beta}^T, \mathbf{v}_{\beta}^T]^T$, $\beta \in \mathcal{O}_i$. The repulsion energy is illustrated in Fig. \ref{fig:sketch_map}(b) and the minimum energy can be achieved when $\Vert \mathbf{p}_{i\beta} \Vert > r_f$. The repulsion gain $k_{or}$ can be expressed as the value corresponding to the green vertex in Fig. \ref{fig:sketch_map}(b). The desired motion direction of a robot is determined by the resultant direction of $\mathbf{v}_{ob}$ and $\nabla _{\mathbf{p}_i} \psi _{or}$, which is denoted as $\mathbf{u}_{go}$. Note that $\nabla _{\mathbf{p}_i} \psi _{or} = 0$ when $\Vert \mathbf{p}_{i\beta} \Vert > r_f$. Therefore, the direction potential $\psi _{od}$ can be designed to denote the difference between the robot's motion direction and its reference $\mathbf{u}_{go}$.
\begin{equation}\label{eq:obstacle avoidance}
    \begin{aligned}
        \psi _{od}(\mathbf{x}_i,\mathbf{x}_{\beta}) =& k_{od} \rho(\lambda,\delta,\frac{r_{\rho}}{(1 + k_{\delta})(r_\beta + r_c)}) \\
        &\times (\exp{(\angle(\mathbf{v}_i,\mathbf{u}_{go}))} - 1)
    \end{aligned}
\end{equation}
where $k_{od}, k_{\delta} > 0$ are the direction gain, risk sector boundary coefficient respectively, $\delta \in (0, 1)$, $r_{\rho} = \Vert \mathbf{p}_{i\beta} \Vert \sin{\theta _{i\beta}}$, $\angle (\mathbf{v}_i,\mathbf{u}_{go}) = \arccos{(\mathbf{v}_i \cdot \mathbf{u}_{go}/}{\Vert \mathbf{v}_i \Vert \cdot \Vert \mathbf{u}_{go} \Vert)}$, $\lambda = 1/(1 + k_{\delta})$. Note $r_\beta=0$ for a static obstacle and $r_\beta>0$ for a dynamic obstacle. The transition function with $k_{\rho} > 1$ is defined as
\begin{equation}\label{eq:bump function}
    \rho (\lambda,\delta,z) =
    \begin{cases}
        k_{\rho}, & \text{if} \ z \in [0,\lambda ) \\
        1, & \text{if} \ z \in [\lambda,\delta) \\
        \frac{1}{2}[1+\cos(\pi\frac{z-\delta}{1-\delta})], & \text{if} \ z\in[\delta,1) \\
        0, & \text{otherwise.}
    \end{cases}
\end{equation}
Consequently, the total obstacle avoidance potential is specified as $\psi _{o} = \psi _{or} + \psi _{od}$. It should be noted that both $\psi _{or}$ and $\psi _{od}$ are essential for ensuring safety in robot-obstacle interactions. The parameter $\psi _{or}$ directly constrains the distance between the robot and the obstacle, effectively maintaining safe separation, although it may lead to bouncing issues \cite{olfati2006flocking}. The introduction of $\psi_{od}$ addresses this concern by providing adequate safety and avoiding rebound problems. Moreover, the adjustable range of the risk sector, determined by $k_{\delta}$, effectively prevents over-rotation \cite{10026865}. Furthermore, our avoid-obstacle strategy only requires knowledge of $\mathbf{p}_{i\beta}$ and $\mathbf{v}_{i\beta}$, 
which can be measured by sensors like cameras, LiDAR, and IMUs. This ensures our strategy's practicality and feasibility in real-world applications.
\begin{figure}[tbp]
    \centering
    \includegraphics[width = 0.9\linewidth]{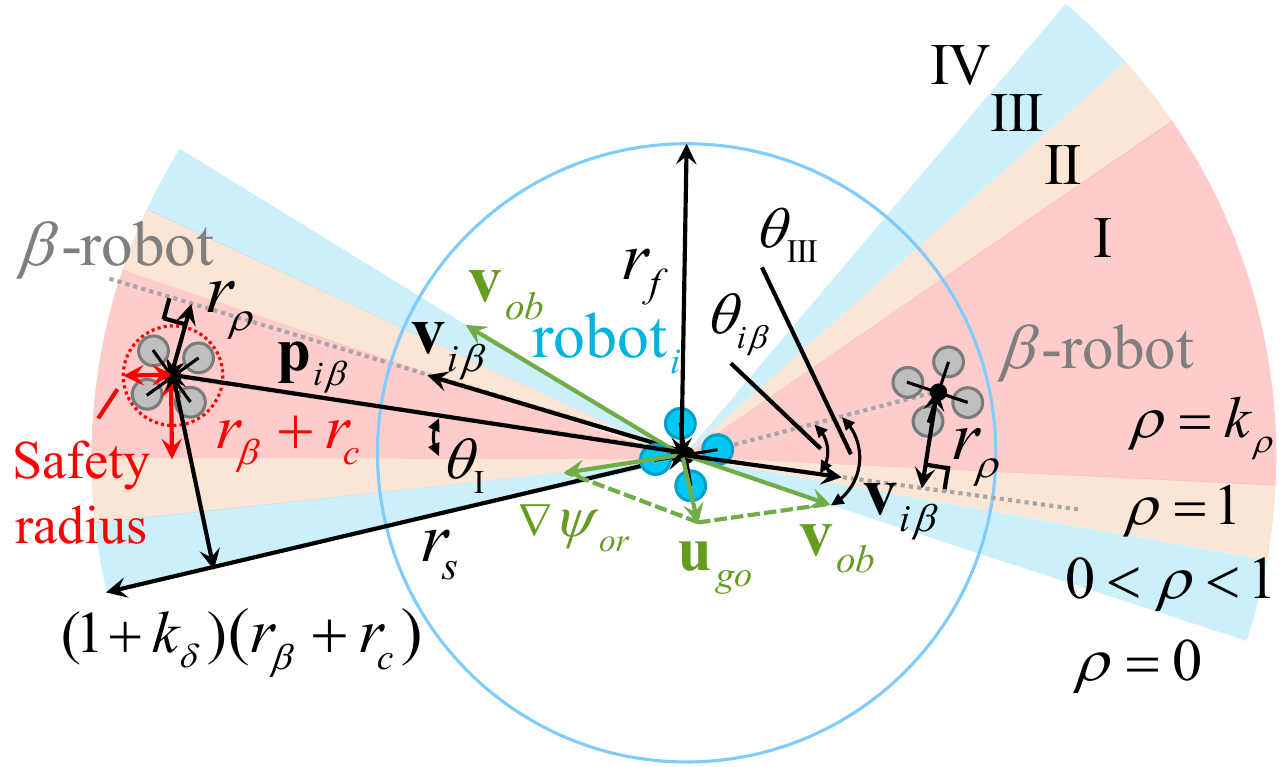}
    \caption{The collision avoidance strategy for robot $i$. The risk level varies across different sectors around the robot, which is represented using sectors with different colors. Sector I represents the highest-risk region, bounded by an angle $\theta_{\mathrm{I}} = \arcsin \left({r_{\beta} + r_c}/{\|\mathbf{p}_{i\beta}\|}\right)$.  Sectors II and III have lower-level risks, which are determined by the function $\rho(\lambda, \delta, z)$. Sector IV is the safe region where $\rho (\lambda,\delta,z) = 0$. The safe boundary is given by $\theta_{\mathrm{III}} = \arcsin \left({(1 + k_{\delta})(r_{\beta} + r_c)}/{\|\mathbf{p}_{i\beta}\|}\right)$. }
    \label{fig:Obstacle_avoidance}
\end{figure}

3) \emph{Goal potential energy}: This potential allows a robot to track a predefined reference trajectory given by $\mathbf{x}_r = [\mathbf{p}_r^T, \mathbf{v}_r^T]^T$. Without loss generality, it is assumed that $\dot{\mathbf{v}}_r = \mathbf{0}$. The purpose of the goal potential is to ensure successful tracking of $\mathbf{x}_r$, which is quantified by using a position potential $\psi _{rp}$ and a velocity potential $\psi _{rv}$ as defined below. 
\begin{equation}\label{eq:Attraction of virtual leader}
    \psi _{rp}(\mathbf{x}_i,\mathbf{x}_{r}) = k_{rp}\left(\exp{(\Vert \mathbf{p}_{ir} \Vert)} - 1\right)
\end{equation}
\begin{equation}\label{eq:Velocity cost}
    \psi _{rv}(\mathbf{x}_i,\mathbf{x}_{r}) = k_{rv}\left(\exp{(\Vert \mathbf{v}_{ir}\Vert)} - 1\right)
\end{equation}
where $k_{rp}$ and $k_{rv}$ are the position and velocity tracking gains, respectively. The position potential $\psi_{rp}$ attracts robots to time-varying reference points, while the velocity potential $\psi_{rv}$ ensures adequate velocity correlation within the swarm, enhancing flocking performance. Thus, the goal potential is $\psi_r = \psi_{rp} + \psi_{rv}$.

\subsection{GRF and Distributed Approximation}\label{subsec:Iterative Optimization}
With the aforementioned potential energies, a GRF is constructed to model the interactions and collective behaviors of robots in flocking. The random variable set of GRF is denoted by $X = \{X_i\} _{i \in \mathcal{A}}$, where $X_i$ is a random variable associated with the state of robot $i$. It means that $X = \mathbf{x}$ represents a particular field configuration with $\mathbf{x}=\left\{\mathbf{x}_i\right\}_{i \in \mathcal{A}}$ and $\mathbf{x}_i$ as the state of robot $i$. The GRF of interest is constructed based on the interactive relationship among robots and the potential energies introduced in Section \ref{subsec:Potential Energies}. The overall interactive relationship of robots in a flock is characterized by an undirected graph that could be factorized into multiple cliques as discussed in Section \ref{subsec:Gibbs Random Field}. According to  (\ref{eq:GibbsDistr}) and results in Section \ref{subsec:Potential Energies}, the potential for each clique is composed by 
\begin{equation}
    \varPsi _{ar}(X_i = \mathbf{x}_i,X_j = \mathbf{x}_j) = \exp{(-\psi _{ar}(\mathbf{x}_i,\mathbf{x}_j))}
\end{equation}
\begin{equation}
    \varPsi _{o}(X_i = \mathbf{x}_i) = \exp{(-\psi _{o}(\mathbf{x}_i,\mathbf{x}_{\beta}))}
\end{equation}
\begin{equation}
    \varPsi _{r}(X_i = \mathbf{x}_i) = \exp{(-\psi _{r}(\mathbf{x}_i,\mathbf{x}_{r}))}
\end{equation}
Note that $\mathbf{x}_r$ is the target state for the flock, and $\mathbf{x}_\beta$ is the state of the $\beta$-robot. Let 
$p(X=\mathbf{x})$ be the probability of a particular field configuration 
$\mathbf{x}$ in $X$, so we have
\begin{equation}
\begin{aligned}
    p(X=\mathbf{x}) &= \frac{1}{Z}\exp \Bigg (  -\sum _i\sum _{j \neq i} \psi _{ar}(\mathbf{x}_i,\mathbf{x}_j) \\
    & - \sum _i \sum _{\beta \in \mathcal{O}_i} \psi _{o}(\mathbf{x}_i,\mathbf{x}_\beta) -\sum _i \psi _{r}(\mathbf{x}_i,\mathbf{x}_{r}) \Bigg )
\end{aligned} \label{eq:GRF_distribution}
\end{equation}

With \eqref{eq:GRF_distribution}, the goal of flocking control becomes finding the optimal control for each robot to maximize the posterior distribution of the GRF at a future time. However, \eqref{eq:GRF_distribution} represents a global message unavailable to each robot, making it impractical to infer the optimal control distributively. To overcome this, we use mean-field theory to approximate \eqref{eq:GRF_distribution} in a distributed manner. Mean-field theory, a type of variational approximation, approximates the true distribution $ p(X=\mathbf{x}) $ on a graph $ \mathcal{G} $ with a simpler, tractable distribution $ q(X=\mathbf{x}) $. This transforms exact inference on $ p(X=\mathbf{x}) $ to inference on $ q(X=\mathbf{x}) $, which is assumed to be fully factorized. So it can be computed in a distributed fashion.
To ensure feasibility and accuracy, it is required to minimize the KL divergence between $q(X=\mathbf{x})$ and $p(X=\mathbf{x})$, namely $\mathbf{D}(q\vert p) = \mathbb{E} _{q(X)} \left[\ln \left(q(X)/p(X)\right)\right]$ \cite{koller2009probabilistic}. The following theorem exists for the mean field approximation.
\begin{theorem}\label{theo:mean_field}
    Suppose the approximate distribution $q(X)$ can be fully factorized with $q(X) = \prod _{i} q_i(X_i)$. The minimization of the KL divergence $\mathbf{D}(q\vert p)$ can be resolved in an iterative way. The iteration process is guaranteed to converge to a minimum under any initial condition.
\end{theorem}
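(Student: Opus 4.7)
The plan is to prove Theorem \ref{theo:mean_field} by casting the minimization of $\mathbf{D}(q\vert p)$ as block coordinate descent on the individual factors $q_i$. First I would substitute $q(X) = \prod_i q_i(X_i)$ into $\mathbf{D}(q\vert p) = \mathbb{E}_{q(X)}[\ln(q(X)/p(X))]$ and regroup terms to isolate the dependence on a single $q_i$. Using the factorization together with linearity of expectation, one obtains, up to an additive constant that depends only on the remaining factors $q_{-i} = \{q_j\}_{j \neq i}$,
\begin{equation}
\mathbf{D}(q\vert p) = \sum_{X_i} q_i(X_i)\ln\frac{q_i(X_i)}{\tilde p_i(X_i)} + C(q_{-i}),
\end{equation}
where $\tilde p_i(X_i) \propto \exp\bigl(\mathbb{E}_{q_{-i}}[\ln p(X)]\bigr)$, normalized over $X_i$. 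This step relies only on algebraic manipulation of the logarithm and the assumed factorization, plus a check that $\tilde p_i$ is well-defined whenever $p$ is strictly positive (which holds for the GRF distribution \eqref{eq:GRF_distribution}).

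Second, I would apply Gibbs' inequality to the displayed expression: the first term is itself a KL divergence between $q_i$ and $\tilde p_i$, hence non-negative, with equality if and only if $q_i = \tilde p_i$. Therefore, with $q_{-i}$ frozen, the unique minimizer subject to $\sum_{X_i} q_i(X_i) = 1$ is $q_i^{\star}(X_i) = \tilde p_i(X_i)$, yielding the standard mean-field coordinate update. Each such update strictly decreases $\mathbf{D}(q\vert p)$ unless $q_i$ is already at its fixed point.

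Third, I would establish convergence by a monotone-bounded argument. Sweeping cyclically over $i = 1,\dots,N$ produces a sequence $\{q^{(n)}\}$ whose values $\mathbf{D}(q^{(n)}\vert p)$ are non-increasing, and the KL divergence is bounded below by zero, so the sequence of values converges for every initial $q^{(0)}$. Combining this with the compactness of the product of probability simplices and the continuity of the coordinate-wise mapping $q_{-i}\mapsto \tilde p_i$, any accumulation point of $\{q^{(n)}\}$ satisfies the fixed-point equations $q_i = \tilde p_i(\,\cdot\,;q_{-i})$ for all $i$, and is therefore a stationary point of $\mathbf{D}(q\vert p)$ over the fully factorized family, i.e., a (local) minimum.

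The main obstacle I anticipate is the last step: monotone convergence of the objective values does not automatically force convergence of the iterates themselves, and promoting ``every accumulation point is a fixed point'' to ``the whole sequence converges'' requires invoking strict convexity of the per-block problem (so that the coordinate minimizer is unique and the update map is continuous) to rule out oscillation between distinct accumulation points. For the purposes of the theorem as stated, this can be handled by the standard variational-inference argument and does not require additional structure beyond what the GRF already provides.
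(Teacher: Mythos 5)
Your proposal is correct, and it reaches the same destination as the paper's proof --- block coordinate descent on the factors $q_i$, the mean-field update $q_i \propto \exp\bigl(\mathbb{E}_{q_{-i}}[\ln p(X)]\bigr)$, and a monotone-bounded convergence argument --- but via a different key lemma. The paper works with the free energy $F(p,q) = \ln Z - \mathbf{D}(q\vert p)$, introduces a Lagrange multiplier for the constraint $\sum_{\mathbf{x}_i} q_i(\mathbf{x}_i) = 1$, sets the derivative of the Lagrangian to zero to obtain the stationary-point characterization \eqref{eq:qi_xi}, and then separately argues that $F$ is concave in each block (energy term linear in $q_i$, entropy term concave) so that the stationary point is the unique per-block optimum. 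You instead ``complete the KL divergence'': isolating the $q_i$-dependence gives $\mathbf{D}(q\vert p) = \mathrm{KL}(q_i \Vert \tilde p_i) + C(q_{-i})$ with $\tilde p_i \propto \exp\bigl(\mathbb{E}_{q_{-i}}[\ln p(X)]\bigr)$, and Gibbs' inequality delivers the update formula, its optimality, and its uniqueness in a single stroke, with normalization handled for free and no calculus needed; your well-definedness check via strict positivity of the GRF distribution \eqref{eq:GRF_distribution} is a point the paper leaves implicit (your $\tilde p_i$ coincides with the paper's clique-restricted form, since factors $\varPsi_{\mathcal{Q}}$ with $X_i \notin X_{\mathcal{Q}}$ are absorbed into the normalizer $Z_i$, exactly as the paper notes). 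On convergence you are in fact more careful than the paper: the paper infers convergence from monotonicity of $F$ and the bound $F \leq \ln Z$, which strictly speaking only yields convergence of the objective values, whereas you explicitly flag that value convergence does not force iterate convergence and sketch the standard fix (compactness of the product of simplices, continuity of the update map, uniqueness of the per-block minimizer, so accumulation points satisfy the fixed-point equations). That extra care closes a gap the paper's own argument glosses over, and is exactly what the theorem's phrase ``converge to a minimum under any initial condition'' needs --- understood, in both your proof and the paper's, as convergence to a stationary point of the mean-field objective rather than a global minimum.
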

\begin{proof}
    See Appendix \ref{subsec:derive_updated_rule_proof}.
\end{proof}
Note that the iterative process in Theorem \ref{theo:mean_field} is performed in a distributed fashion, where each element $q_i(X_i=\mathbf{x}_{i})$ in $q(X)$ is calculated recursively as given below.
\begin{equation}\label{eq:criteria}
\begin{aligned}
    q_{i}(X_i = \mathbf{x}_{i}) &= \frac{1}{Z_i}\exp\Bigg (-\sum _{j \in \mathcal{N}_i} \sum _{\mathbf{x}_j} q_{j}(\mathbf{x}_{j}) \psi _{ar}(\mathbf{x}_{i},\mathbf{x}_{j}) \\ 
    & - \sum _{\beta \in \mathcal{O}_{i}} \psi _{o}(\mathbf{x}_{i},\mathbf{x}_{\beta}) -\psi _{r}(\mathbf{x}_{i},\mathbf{x}_{r}) \Bigg)
\end{aligned}
\end{equation}
The iterative calculation of $ q_i(X_i=\mathbf{x}_i) $ in \eqref{eq:criteria} relies on local information from robot $ i $'s neighbors, making it computationally feasible for a distributed flocking system. Through this iterative process, an optimal mean-field approximation $ q(X) $ of the true distribution $ p(X) $ can be obtained. Hence, the best control is inferred using $ q(X) $ rather than $ p(X) $.
In the sequel, $q(X)= \prod _{i} q_i(X_i)$ is employed to infer the optimal control. 

\subsection{Inference of the Optimal Control}\label{subsec:Optimal Control Input}

The optimal control is determined by maximizing the posterior distribution of $ q(X) $ at a future time, resulting in flocking with minimal potential energies. Future robot states are predicted using the dynamic model \eqref{eq:state-space model}. Hence, the optimal control is calculated using both current and future information, so it is inherently a predictive control method. Since the approximated distribution $ q(X) $ is fully factorized with $ q(X) = \prod_i q_i(X_i) $, the optimal control inference for robot $ i $ is fully distributed, maximizing its local distribution $ q_i(X_i) $ at a future time instant. However, predicting $ q_i(X_i) $ precisely is intractable because future states $ \mathbf{x}_j $ are unclear. To address this, it is assumed that the neighbors of robot $ i $ maintain their current input throughout the prediction horizon.

The maximum a posteriori for control inference is a nonlinear optimal control problem with complex objective functions as in \eqref{eq:criteria}, making an exact solution computationally intractable. Instead, researchers have used an approximate solution by discretizing the input space of robot $ i $ evenly, similar to the state lattice planner \cite{fernando2020swarming}, as shown in Fig. \ref{fig:control inputs}(a). The best control is chosen from the discrete set by evaluating its posterior distribution using \eqref{eq:criteria}. The performance of this method heavily relies on the fineness of the input space discretization. A finer discretization exponentially increases the number of inputs to evaluate, leading to a burgeoning computation cost.

In our design, the conflict between discretization subtleness and computational cost is mitigated by introducing a heuristic solution $\mathbf{u}_g$. The input space is biased around $\mathbf{u}_g$, allowing us to discretize only a neighborhood of the input space rather than the whole space, as shown in Fig. \ref{fig:control inputs}(c). This approach significantly reduces the search space, enabling a finer discretization and lower computational cost.

The objective of the predictive control is to minimize the total potential energies by \eqref{eq:Attraction and repulsion}, \eqref{eq:Obstacle repulsion}, \eqref{eq:obstacle avoidance}, \eqref{eq:Attraction of virtual leader}, and \eqref{eq:Velocity cost}. Hence, a natural design of $\mathbf{u}_g$ is given by the combination of the negative gradients of the potential energies in Section \ref{subsec:Potential Energies}. For the potential energies (\ref{eq:Attraction and repulsion}) and (\ref{eq:Attraction of virtual leader}), we have $\mathbf{u}_{gar,i}  = -\nabla _{\mathbf{p}_i} \psi _{ar} $ and $\mathbf{u}_{grp,i} = -\nabla _{\mathbf{p}_i} \psi _{rp}$, so 
\begin{equation}\label{eq:gradient of inter-robot}
\begin{aligned}
    \mathbf{u}_{gar,i} 
    &= \begin{cases}
        \pi \frac{k_a}{r_f}\sin{\left (\pi \frac{\Vert\mathbf{p}_{ij}\Vert}{r_f} \right )}\mathbf{n}_{ij},& \text{if} \ \Vert\mathbf{p}_{ij}\Vert \leq k_tr_f \\
        \pi \frac{k_a}{r_f} \sin{(\pi k_t)} \mathbf{n}_{ij},& \text{otherwise.}
    \end{cases} 
\end{aligned}
\end{equation}
\begin{equation}\label{eq:gradient of robot-leader}
    \mathbf{u}_{grp,i}  = -k_{rp}\exp{(\Vert \mathbf{p}_{ir}\Vert)}\mathbf{n}_{ir}
\end{equation}
where $\mathbf{n}_{ij} = \mathbf{p}_{ij}/\Vert \mathbf{p}_{ij} \Vert$ and $\mathbf{n}_{ir} = \mathbf{p}_{ir}/\Vert \mathbf{p}_{ir} \Vert$. Note that \eqref{eq:gradient of inter-robot} and \eqref{eq:gradient of robot-leader} are implemented for the scenario where $\Vert \mathbf{p}_{ij} \Vert,\Vert \mathbf{p}_{ir} \Vert > 0$. To ensure smooth performance, damping forces $\mathbf{u}_{gav,i} = -k_{av}\mathbf{v}_{ij}$ and $\mathbf{u}_{grv,i} = -k_{rv}^{'}\mathbf{v}_{ir}$ are added to the heuristic solution $\mathbf{u}_{g,i}$ of robot $i$, where $k_{av}, k_{rv}^{'} > 0$ denote the damping gain. In an obstacle-free environment, the heuristic solution $\mathbf{u}_{g,i}$ is denoted by $\mathbf{u}_{g,i}^0$.
\begin{equation}\label{eq:The first term of initial solution}
    \mathbf{u}_{g,i}^0 = \sum _{\mathcal{N}} \mathbf{u}_{gar,i} + \sum _{\mathcal{N}}\mathbf{u}_{gav,i} + \mathbf{u}_{grp,i} + \mathbf{u}_{grv,i}
\end{equation}
In an obstacle-free environment, $\mathbf{u}_{g,i}^0$ is possible to ensure a certain convergent flocking behavior as stated in Theorem \ref{theo:initial_solution}. 
\begin{figure}[tbp]
    \centering
    \includegraphics[width = 1\linewidth]{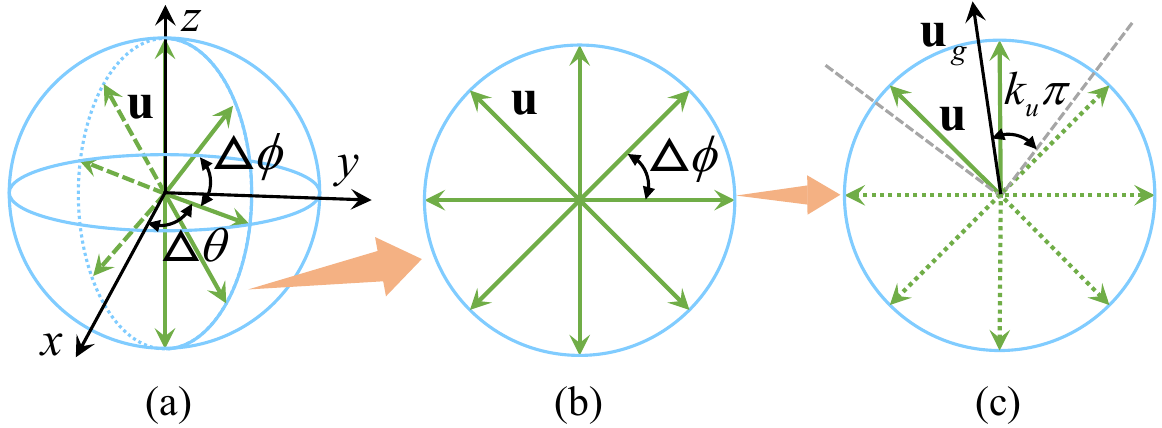}
    \caption{(a) Control discretization (green arrow). (b) Uniform discretization. (c) Biased local discretization (solid green arrow) based on $\mathbf{u}_g$ which has a cone-like shape in space.}
    \label{fig:control inputs}
\end{figure}
\begin{theorem}\label{theo:initial_solution}
    Consider $N$ robots with the dynamics (\ref{eq:state-space model}) in an obstacle-free environment. If $\mathbf{v}_r$ is a constant, $V(0)$ is finite, and $\mathcal{G}$ is connected, the velocities of all the robots will asymptotically converge to $\mathbf{v}_r$ by the control law (\ref{eq:The first term of initial solution}).
\end{theorem}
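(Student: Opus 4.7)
The plan is a Lyapunov--LaSalle argument in the spirit of the classical Olfati-Saber flocking analyses. Passing to the continuous-time limit of \eqref{eq:state-space model}, I would take the composite energy
\[
V = \tfrac{1}{2}\sum_{i\in\mathcal{A}}\sum_{j\in\mathcal{N}_i}\psi_{ar}(\mathbf{x}_i,\mathbf{x}_j) + \sum_{i\in\mathcal{A}}\psi_{rp}(\mathbf{x}_i,\mathbf{x}_r) + \tfrac{1}{2}\sum_{i\in\mathcal{A}}\|\mathbf{v}_i-\mathbf{v}_r\|^2,
\]
which aggregates every position potential active in the obstacle-free setting with a kinetic-like term measured in the frame co-moving with $\mathbf{v}_r$.

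Differentiating along trajectories and using the pairwise antisymmetries $\nabla_{\mathbf{p}_j}\psi_{ar}=-\nabla_{\mathbf{p}_i}\psi_{ar}$ and $\nabla_{\mathbf{p}_r}\psi_{rp}=-\nabla_{\mathbf{p}_i}\psi_{rp}$, the symmetry of the edge set, and the hypothesis $\dot{\mathbf{v}}_r=\mathbf{0}$, the position-potential contributions reorganize (noting that $\sum_i\sum_{j\in\mathcal{N}_i}\nabla_{\mathbf{p}_i}\psi_{ar}\cdot\mathbf{v}_r=0$) as
\[
\dot V = \sum_i(\mathbf{v}_i-\mathbf{v}_r)\cdot\Bigl[\sum_{j\in\mathcal{N}_i}\nabla_{\mathbf{p}_i}\psi_{ar}+\nabla_{\mathbf{p}_i}\psi_{rp}+\mathbf{u}_{g,i}^0\Bigr].
\]
Substituting \eqref{eq:The first term of initial solution}, the gradient summands in the bracket cancel the $\mathbf{u}_{gar,i}$ and $\mathbf{u}_{grp,i}$ pieces of $\mathbf{u}_{g,i}^0$ exactly, and symmetrizing the remaining damping over undirected edges produces a Laplacian-type quadratic form,
\[
\dot V = -\tfrac{k_{av}}{2}\sum_i\sum_{j\in\mathcal{N}_i}\|\mathbf{v}_i-\mathbf{v}_j\|^2 - k_{rv}'\sum_i\|\mathbf{v}_i-\mathbf{v}_r\|^2 \le 0.
\]
Finiteness of $V(0)$, the exponential blow-up of $\psi_{rp}$ in $\|\mathbf{p}_{ir}\|$, and the kinetic term jointly make the sublevel set $\{V\le V(0)\}$ forward invariant and precompact, so LaSalle's invariance principle (or, equivalently, a Barbalat-type argument using the boundedness of $\dot{\mathbf{v}}_i$) yields convergence into the largest invariant subset of $\{\dot V=0\}$; the second term of $\dot V$ alone forces $\mathbf{v}_i=\mathbf{v}_r$ for every $i$ on that set, which is the claim.

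The main technical obstacle is that the neighbor set $\mathcal{N}_i$ is defined by the hard cut-off $\|\mathbf{p}_{ij}\|\le r_s$, so an edge leaving $\mathcal{N}_i$ would discontinuously drop its contribution to $V$ and invalidate the chain rule used above. I would close this gap by combining the assumption that $\mathcal{G}$ is initially connected with the finite budget $V(0)$: on the attractive branch $\|\mathbf{p}_{ij}\|\in[k_t r_f,r_s]$ the potential $\psi_{ar}$ is linearly increasing (since $\sin(\pi k_t)<0$ for $k_t\in(1,2)$), so each existing edge faces a potential barrier that it cannot cross without lifting $V$ above $V(0)$. An alternative, cleaner route is to smooth $\psi_{ar}$ so that it diverges at $r_s$ in the spirit of Olfati-Saber's $\sigma$-norm construction. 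Either route secures preservation of the communication graph, after which the Lyapunov/LaSalle machinery above closes the proof essentially automatically.
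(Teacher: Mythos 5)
Your proposal follows essentially the same route as the paper's own proof: the identical Lyapunov function $V=\tfrac12\sum_i\bigl(\sum_{j\in\mathcal{N}_i}\psi_{ar}+2\psi_{rp}+\Vert\mathbf{v}_{ir}\Vert^2\bigr)$, the same cancellation of the gradient terms leaving only the damping, so that $\dot V=-\tilde{\mathbf{v}}^{T}\bigl((k_{av}\mathbf{L}+k_{rv}'\mathbf{I})\otimes\mathbf{I}_3\bigr)\tilde{\mathbf{v}}\le 0$ (your symmetrized edge sum is exactly this Laplacian quadratic form), followed by the same compactness-plus-LaSalle step forcing $\mathbf{v}_i=\mathbf{v}_r$ on the limit set. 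Your closing discussion of edge switching goes beyond the paper, which silently assumes $\mathcal{G}$ remains connected; be aware, though, that your ``potential barrier'' argument is not airtight as stated, since $\psi_{ar}$ is only linear and hence finite at $r_s$, so edges can break whenever $V(0)$ exceeds that finite barrier --- only your alternative of modifying $\psi_{ar}$ to diverge at $r_s$ genuinely secures graph preservation.
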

\begin{proof}
    See Appendix \ref{subsec:initial_solution_proof}.
\end{proof}
\begin{remark}
    The conditions of Theorem \ref{theo:initial_solution} are easily satisfied. There always exists $0 \leq \psi _{ar} \leq 2k_a$. Also, $\psi _{rp}$ is finite if robot $i \in \mathcal{A}$ is located at a finite distance to the target position. Hence, $V(0)$ is always finite. If $\Vert \mathbf{p}_{ij} \Vert \leq r_s$, $\mathcal{G}$ is always connected. In fact, even if the graph is split, it can be re-connected by the control law (\ref{eq:The first term of initial solution}).
\end{remark}
\begin{proposition}\label{prop:proposition_1}
    Consider the same conditions as in theorem \ref{theo:initial_solution} other than that $\mathcal{G}$ is not connected. Then, the graph $\mathcal{G}$ of a flock can be re-connected by the control law (\ref{eq:The first term of initial solution}).
\end{proposition}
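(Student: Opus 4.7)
The plan is to leverage Theorem~\ref{theo:initial_solution} on each connected component separately and exploit the fact that the goal-tracking terms $\mathbf{u}_{grp,i}$ and $\mathbf{u}_{grv,i}$ in (\ref{eq:The first term of initial solution}) act on every robot independently of its neighborhood, so that every component is pulled toward the same moving reference $\mathbf{x}_r$ and reconnection is forced once the components' spatial supports overlap within the sensing radius $r_s$.

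First, I would decompose $\mathcal{G}(\mathcal{A},\mathcal{E})$ into its finitely many connected components $\mathcal{G}_1,\ldots,\mathcal{G}_m$. Since no interaction edge crosses between components, the sums over $\mathcal{N}_i$ in (\ref{eq:The first term of initial solution}) reduce to intra-component sums, so each $\mathcal{G}_k$ evolves as an independent closed-loop sub-system under the same control law. Each sub-system is connected by construction, its restricted Lyapunov value $V_k(0)$ is finite (as a partial sum of the global finite $V(0)$), and $\mathbf{v}_r$ remains constant; Theorem~\ref{theo:initial_solution} therefore applies verbatim and yields $\mathbf{v}_i(t)\to\mathbf{v}_r$ for every $i\in\mathcal{A}$.

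Next, I would use the fact that the Lyapunov function employed in the proof of Theorem~\ref{theo:initial_solution} dominates the exponential goal potential $\sum_i\psi_{rp}$. Because $V$ is non-increasing along trajectories and each $\psi_{rp}\ge 0$, one obtains a uniform-in-time bound $\|\mathbf{p}_i(t)-\mathbf{p}_r(t)\|\le R$ for some finite $R$, so every robot remains in the ball $\mathcal{B}(\mathbf{p}_r(t),R)$ for all $t$. To finish, I would invoke a LaSalle-type argument within each component to characterize the $\omega$-limit set: the conditions $\mathbf{v}_i\equiv\mathbf{v}_r$ and $\dot{\mathbf{v}}_i\equiv\mathbf{0}$ force $\sum_{i\in\mathcal{G}_k}\mathbf{u}_{grp,i}=\mathbf{0}$, which, given the radial exponential form of $\mathbf{u}_{grp,i}$, can only hold when the centroid of $\mathcal{G}_k$ coincides with $\mathbf{p}_r$. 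All component centroids therefore converge to the same point, so for any two components $\mathcal{G}_a\ne\mathcal{G}_b$ some pair $(i,j)\in\mathcal{G}_a\times\mathcal{G}_b$ eventually satisfies $\|\mathbf{p}_{ij}\|\le r_s$, at which instant the edge $(i,j)$ is added to $\mathcal{E}$ and the two components merge; iterating over pairs yields a fully connected $\mathcal{G}$ in finite time.

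The main obstacle is the LaSalle step: one must rule out non-trivial limit configurations in which two distinct components settle at antipodal positions inside $\mathcal{B}(\mathbf{p}_r,R)$. Handling this cleanly likely requires showing that the centroid condition, together with the strict convexity of $\psi_{rp}$ in $\|\mathbf{p}_{ir}\|$, admits only the centroid-at-$\mathbf{p}_r$ solution; any remaining subtleties about the transient phase can be absorbed by re-applying the argument at each merge time to strictly reduce the number of components.
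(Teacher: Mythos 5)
Your overall route coincides with the paper's: exploit the fact that the goal terms $\mathbf{u}_{grp,i}$ and $\mathbf{u}_{grv,i}$ in \eqref{eq:The first term of initial solution} act on every robot independently of its neighborhood, so every component is pulled toward the common reference and must eventually come within the sensing radius $r_s$ of the others. The paper compresses this into a short contradiction argument: a robot with no neighbors is driven only by $\mathbf{u}_{grp,i}$ and $\mathbf{u}_{grv,i}$, which constitute a PD controller, so it coincides with the leader at steady state; hence some pair eventually satisfies $\Vert \mathbf{p}_{ij}\Vert \leq r_s$, contradicting disconnectedness. Your componentwise decomposition, the verbatim re-use of Theorem \ref{theo:initial_solution} on each component, and the uniform bound $\Vert \mathbf{p}_i - \mathbf{p}_r\Vert \leq R$ extracted from the Lyapunov function are all legitimate elaborations of that same idea.

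The genuine gap is in your LaSalle step. From $\mathbf{v}_i \equiv \mathbf{v}_r$ and cancellation of the internal (antisymmetric) forces you correctly obtain $\sum_{i \in \mathcal{G}_k} \mathbf{u}_{grp,i} = \mathbf{0}$, i.e., by \eqref{eq:gradient of robot-leader}, $\sum_{i} \exp(\Vert \mathbf{p}_{ir}\Vert)\,\mathbf{n}_{ir} = \mathbf{0}$. But because the weights are exponential in the distances rather than uniform, this does \emph{not} imply that the centroid of the component sits at $\mathbf{p}_r$; it only implies that $\mathbf{p}_r$ lies in the convex hull of the component's positions (already with three robots one can balance the weighted unit vectors while the Euclidean centroid is displaced from $\mathbf{p}_r$). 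Worse, even the corrected convex-hull statement does not finish the argument: two components can each be force-balanced around $\mathbf{p}_r$ while every cross-component pair remains farther than $r_s$ apart, e.g., nested ring-like configurations at different radii --- precisely the ``non-trivial limit configuration'' you flag yourself. Your proposed repair via strict convexity of $\psi_{rp}$ does not close this, because the steady states are critical points of the \emph{full} potential including the inter-robot terms $\psi_{ar}$, not minimizers of $\sum_i \psi_{rp}$ alone, so convexity of the goal potential says nothing about which equilibria survive. To be fair, the paper's own proof assumes this exact crux away --- it treats each robot as if driven solely by the PD pair and asserts it ``coincides with the leader'' --- so your attempt is more explicit about where the difficulty lies; but as written, the centroid deduction is false and the argument does not go through without an additional lemma excluding balanced multi-component equilibria.
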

\begin{proof}
    See Appendix \ref{subsec:graph_connected_proof}.
\end{proof}
 
To incorporate the collision-avoidance behavior in congested environments, we introduce the following terms.
\begin{equation}\label{eq:heuristic obstacle avoidance}
\left\{\begin{array}{cl}
\mathbf{u}_{gor,i} &= -\nabla _{\mathbf{p}_i} \psi _{or}\\
\mathbf{u}_{gob,i} &= k_{ob}\mathbf{v}_{ob} / \Vert\mathbf{v}_{ob}\Vert = k_{ob} \mathbf{R}_{GL}^T \mathbf{R}_{L}^T\left [\begin{array}{c}
        1,\ 0,\ 0
    \end{array}
    \right ]^T
    \end{array}\right.
\end{equation}
where $k_{ob} > 0$ is the bypass speed gain, which determines the degree of bypass smoothness, $\mathbf{R}_{GL} = \underline{\mathbf{e}}_L\cdot \underline{\mathbf{e}}_G^T$ with $\underline{\mathbf{e}}_L = \left [\mathbf{L}_1, \mathbf{L}_2,\mathbf{L}_3 \right ]^T$, $\mathbf{L}_1 = -\mathbf{p}_{ij} /\Vert\mathbf{p}_{ij}\Vert$, $\mathbf{L}_3 = -\mathbf{p}_{i\beta} \times \mathbf{v}_{i\beta}/\Vert\mathbf{p}_{i\beta} \times \mathbf{v}_{i\beta}\Vert$, $\mathbf{L}_2 = \mathbf{L}_3 \times \mathbf{L}_1$ and $ \underline{\mathbf{e}}_G = \left [\mathbf{G}_1, \mathbf{G}_2, \mathbf{G}_3 \right ]^T$, $ \mathbf{G}_1 = \left [1, 0, 0 \right ]^T$, $ \mathbf{G}_2 = \left [0, 1, 0 \right ]^T$, $ \mathbf{G}_3 = \left [0, 0, 1 \right ]^T$, and $\mathbf{R}_L = \left [\cos{\theta _{\mathrm{III}}} \ \sin{\theta _{\mathrm{III}}} \ 0; -\sin{\theta _{\mathrm{III}}} \ \cos{\theta _{\mathrm{III}}} \right.$ $\left. \ 0; 0 \ 0 \ 1 \right ] $. 

The heuristic solution related to collision avoidance is $\mathbf{u}_{g,i}^1 = \sum _{\mathcal{O}} \mathbf{u}_{gor,i} + \sum _{\mathcal{O}} \mathbf{u}_{gob,i}$. Hence, the overall heuristic solution $\mathbf{u}_{g,i}$ of the robot $i$ is given by
\begin{equation}\label{eq:Initial solution chapter5}
    \mathbf{u}_{g,i} = \mathbf{u}_{g,i}^0 + \mathbf{u}_{g,i}^1
\end{equation}
Based on \eqref{eq:Initial solution chapter5}, the discretization results are 
\begin{align}\label{eq:rectify control space}
    \mathcal{U}_i = & \left \{\mathbf{u}_i|\mathbf{u}_i = u_t\mathbf{e}(k_1\triangle \theta,k_2\triangle \phi),k_1 \in [0,k_{\theta}), k_2 \in [0,  \right. \nonumber \\ 
    & \left. k_{\phi}), \angle(\mathbf{u},\mathbf{u}_{g,i}) \leq k_u\pi ,k_1,k_2,k_{\theta},k_{\phi} \in \mathbb{Z} \right \}
\end{align}
where $\mathcal{U}_i$ is the set of input candidates of robot $i$ to be evaluated, $u_t = \{u_{max}/n_{u},\ldots,u_{max}\}$ with $n_u \in \mathbb{Z}$ is an arithmetic progression, $\mathbf{e}(\theta,\phi) = [\cos{\phi} \cos{\theta} \ \cos{\phi} \sin{\theta} \ \sin{\phi}]^T$, $\angle (\mathbf{u},\mathbf{u}_{g,i}) = \arccos{(\mathbf{u}\cdot\mathbf{u}_{g,i}}/\Vert\mathbf{u}\Vert\cdot\Vert\mathbf{u}_{g,i}\Vert)$, and $ \triangle \theta$ and $ \triangle \phi$ are shown in Fig. \ref{fig:control inputs}. And $k_{\theta}$ and $k_{\phi}$ are discretization coefficients which are chosen such that $k_{\theta} \triangle \theta = k_{\phi} \triangle \phi = 2 \pi$. In addition, $0 < k_u < 1$ 
\begin{algorithm}[tbp]
    \caption{Heuristic predictive flocking control}\label{algorithm:Predictive control with initial solution}
    \begin{algorithmic}[1] 
        \For {$ h \in \mathcal{N}_i \cup \{i\}$}
            \State Compute $\mathbf{u}_{g,h}$ at time instance $t_k$
            \State Compute $\mathcal{U}_{h,t_k}$ based on $\mathbf{u}_{g,h}$ at time instance $t_k$   
            \State Compute $\mathcal{X}_{h,t_k + H}$ based on dynamics model
            \State Initialization: $q_h(\mathbf{x}_{h,t_k + H}) \longleftarrow 1/\vert \mathcal{X}_{h,t_k + H} \vert$
        \EndFor
        \While {$q_{old}(X) \neq q(X)$}
            \State $q_{old}(X) \longleftarrow q(X)$
            \For {$\mathbf{x}_{h,t_k + H} \in \mathcal{X}_{h,t_k + H}$}
                \State 
                $\begin{aligned}
                    \psi _{two} \longleftarrow &  \sum _{j \in \mathcal{N}_i \cup \{i\} / \{h\} } \sum _{\mathbf{x}_{j,t_k + H}} q_j(\mathbf{x}_{j,t_k + H}) \times \\
                    & \psi _{ar} (\mathbf{x}_{h,t_k + H}, \mathbf{x}_{j,t_k + H})
                \end{aligned}$
                \State 
                $\begin{aligned}
                    \psi _{single} \longleftarrow & \sum _{\beta \in \mathcal{O}_h}\psi _{o}(\mathbf{x}_{h,t_k + H}, \mathbf{x}_{\beta}) \\
                    &+ \psi _{r}(\mathbf{x}_{h,t_k + H}, \mathbf{x}_{r})
                \end{aligned}$
                \State $q_h(\mathbf{x}_{h,t_k + H}) \longleftarrow \exp{( - \psi _{two} - \psi _{single})}$
                \State Normalize $q_h(X_h)$ to sum to one
            \EndFor
        \EndWhile 
        \State Find $\mathbf{x}_{i,t_k + H}^*$ so that $\arg_{\mathbf{x}_{i,t_k + H}^*} \max q_i(\mathbf{x}_{i,t_k + H})$
        \State Get $\mathbf{u}_i^*$ from $\mathcal{U}_{i,t_k}$ based on $\mathbf{x}_{i,t_k + H}^*$
        \State \Return $\mathbf{u}_i^*$
    \end{algorithmic} 
\end{algorithm}
\begin{figure*}
    \centering
    \includegraphics[width = 1\linewidth]{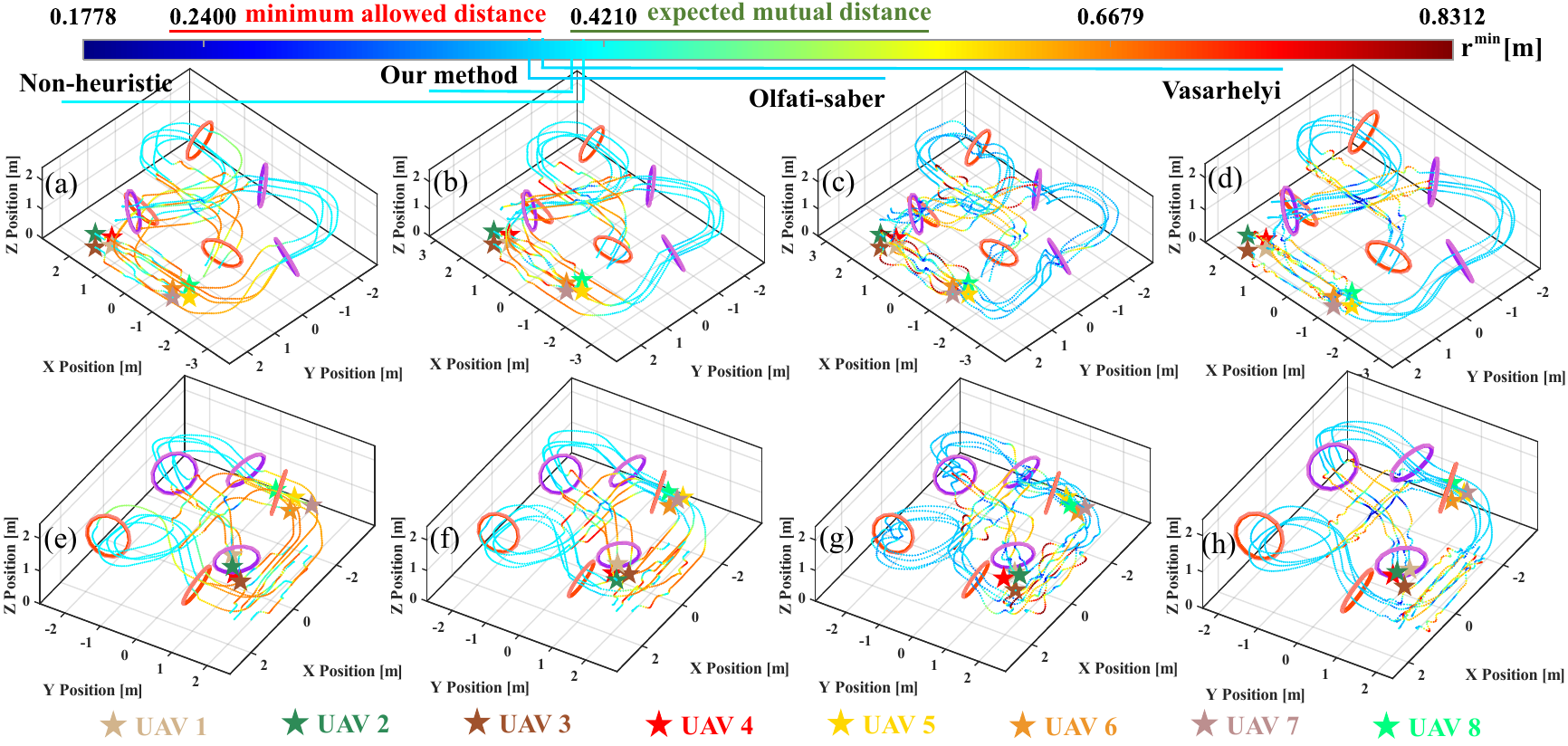}
    \caption{Simulation trajectories. (a)(e):The non-heuristic method;(b)(f): Our method; (c)(g): Olfati-saber's method; and (d)(h): V\'{a}s\'{a}rhelyi's method. The first row shows the trajectories from a view angle of $az = -135^{\circ}$ and $el = 42^{\circ}$, while the second row shows the trajectories from a view angle of $az = 121^{\circ}$ and $el = 38^{\circ}$. In the colorbar, the light blue lines leading from different positions indicate the distance between the robot and its nearest neighbor when the four methods form a stable motion (\emph{i.e.}, the robot has the same velocity as the reference state $\mathbf{v}_r$). Obviously, the distance between the UAVs is closer to the expected value $r_f = 0.421 \ \mathrm{m}$ in the absence of obstacles when using the first two methods(non-heuristic $0.417 \ \mathrm{m}$, our method $0.412 \ \mathrm{m}$, Olfati-saber $0.3912 \ \mathrm{m}$, V\'{a}s\'{a}rhelyi $0.3978 \ \mathrm{m}$). The UAV trajectories by our method  are smoother in comparison with the other methods.}
    \label{fig:simulation trajectory}
\end{figure*}

Based on $\mathcal{U}_i$, the predicted states of robot $i$ are denoted as $\mathcal{X}_{i,t_k + H} = \{ \mathbf{x}_i(t_k + H) \vert \mathbf{x}_i(t_k + H) = \mathbf{A}\mathbf{x}_i(t_k) + \mathbf{B}\mathbf{u}_i(t_k) \}$, $\mathbf{u}_i(t_k) \in \mathcal{U}_{i,t_k}$, $i \in \mathcal{A}$, where $H$ is the prediction horizion, and $\mathcal{U}_{i,t_k}$ is the input space of robot $i$ at the time instant $t_k$. A posteriori distribution of each predicted state in $\mathcal{X}_{i,t_k + H}$ is, therefore, calculated using \eqref{eq:criteria}. The optimal control $\mathbf{u}_i^*(t_k)$ is determined by the candidate control input $\mathbf{u}_i(t_k)\in\mathcal{U}_{i,t_k}$ leading to MAP distribution on $\mathbf{x}_i(t_k + H)$. The whole method of robot $i$ is summarized in Algorithm \ref{algorithm:Predictive control with initial solution}.
\begin{figure}
    \centering
    \includegraphics[width = 1\linewidth]{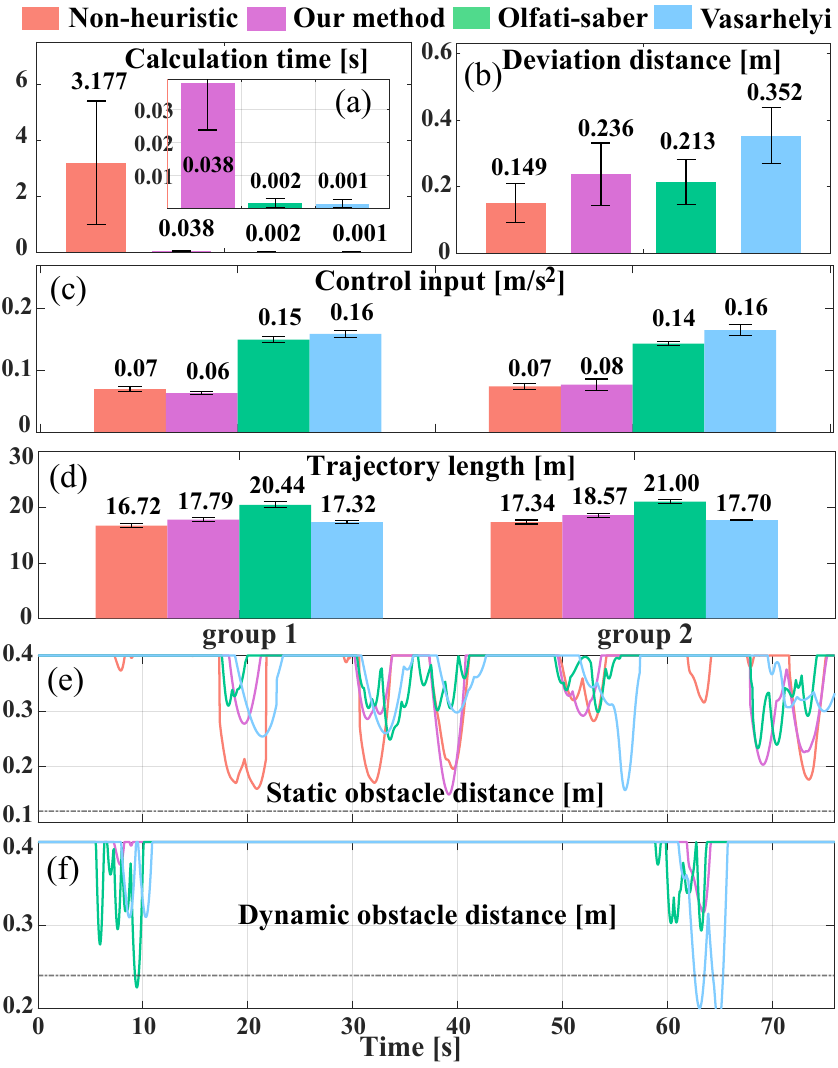}
    \caption{Performance comparison. The two predictive methods have obvious advantages in most aspects, including control efficiency and collision avoidance. The computation efficiency of our method is greatly improved as compared to that of the non-heuristic predictive method.}
    \label{fig:simulation metrics}
\end{figure}
	
\section{Simulations and Experiments}\label{sec:Experiments}
Numerical simulation and real-world experiments are conducted to validate the efficiency of the proposed algorithm in this section. Some metrics are introduced to evaluation
algorithm performance. 

The \emph{calculation time metric} \label{eq:calculation time metric} evaluates how much time the computation of the algorithm took on average at each step, which is calculated by $t_{cal}^{avg} = \triangle t/T\sum _{t_k = 0}^{T/\triangle t}t(t_k)$, where $T$ is the total running time.  A smaller $t_{cal}^{avg}$ implies that the corresponding method is more time-efficient.

The \emph{deviation metric} calculates the distance of the flock center to the reference position $\mathbf{p}_r$. It is denoted as $r_{dev} = \sum _{i \in \mathcal{A}} \Vert \mathbf{p}_{ir} \Vert/N \in \mathbb{R}$. A smaller deviation indicates that the corresponding method has better tracking performance.

The \emph{control input metric} evaluates the input efficiency of each robot, which is calculated by $u^{avg} = \triangle t/T \sum_{t_k=0}^{T/\triangle t} ||\mathbf{u}^*(t_k)||$\label{eq:mean control input metric}. A smaller $u^{avg}$ implies that it is more energy efficient.

The \emph{trajectory length metric} evaluates the distance traveled by robots. A short trajectory length is preferable in general, which is calculated by $L = \sum_{\substack{t_k=0 }} ^{T/\triangle t - 1}||\mathbf{p}(t_k + \triangle t) - \mathbf{p}(t_k)||$\label{eq:trajectory length metric}. 

The \emph{distance metric} evaluates the minimum distance of the robot from its neighboring robots and the minimum distance from the obstacle, the former is calculated by $r^{min} = \min \{r_{ij}^{min},r_{i\beta}^{min} \} \in \mathbb{R}^{N}$ and the latter is $r_{static}^{min} = \min\{r_{i\beta}^{min} \vert \Vert \mathbf{v}_{\beta} \Vert = 0 \} \in \mathbb{R}$ and $r_{dynamic}^{min} = \min\{r_{i\beta}^{min} \vert  \Vert \mathbf{v}_{\beta} \Vert \geq 0 \} \in \mathbb{R}$, where $r_{ij}^{min} = \min\{\Vert \mathbf{p}_{ij} \Vert \vert  j \in \mathcal{N}_i\}$, $r_{i\beta}^{min} = \min\{\Vert\mathbf{p}_{i\beta}\Vert \vert \beta \in \mathcal{O}_i\}$. The minimum distance should be guaranteed to meet safety constraints. A larger $r^{min}$ indicates an algorithm is more secure.

\subsection{Numerical Simulations}\label{subsec:Simulation Analysis}
In the simulation, we consider two independent groups of UAVs flying in one confined environment with multiple openings. Each group has four UAVs, so there are eight UAVs in total in the simulation. There are no communications for UAVs from different groups, so UAVs from a different group are taken as moving obstacles. Each group should follow their respective reference state $\mathbf{x}_r$ and pass through the purple or orange door frames as shown in Fig. \ref{fig:simulation trajectory}. Note that $r_f$ is set to $k_nr_f$ ($k_n > 1$) when the robot traverses dense obstacles. Such a simulation environment is congested, competitive, and challenging for all UAVs. The proposed heuristic predictive control algorithm is compared with a non-heuristic version and two existing popular methods, namely Olfati-saber's method \cite{olfati2006flocking} and the V\'{a}s\'{a}rhelyi's method \cite{vasarhelyi2018optimized}. This non-heuristic version does not have $\mathbf{u}_g$ with $k_u = 1$, but all other implementation details, such as parameter settings, are consistent with our heuristic design. Choose $r_c = 0.12 \ \mathrm{m}$, $r_{\beta} = 0.12 \ \mathrm{m}$, $v_{max} = 0.3 \ \mathrm{m/s}$, $u_{max} = 0.4 \ \mathrm{m/s^2}$, $\triangle t = 0.05 \ \mathrm{s}$, $H = 0.15 \ \mathrm{s}$. All the necessary parameters are summarized in Table \ref{tab:Algorithm parameters setup}. All algorithms are implemented in MATLAB on a desktop computer with Intel i7-8550U 1.8GHz CPU and 16GB memory. The UAV trajectories from different methods are illustrated in Fig. \ref{fig:simulation trajectory}. The GRF-based optimal control---both the heuristic and non-heuristic versions---has a better performance in keeping expected mutual distances among UAVs than the other two methods as shown in Fig. \ref{fig:simulation trajectory}.
\begin{figure}
    \centering
    \includegraphics[width = 1\linewidth]{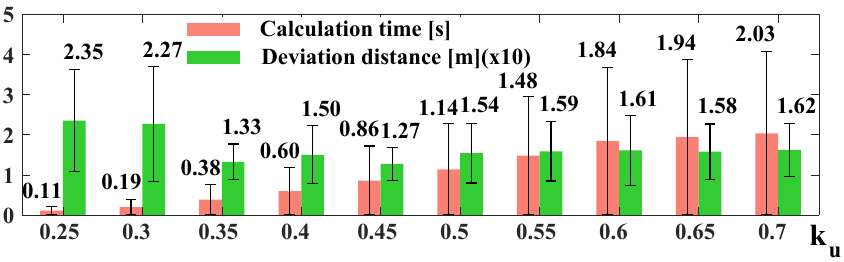}
    \caption{The influence of different $k_u$ on $t_{cal}^{avg}$ and $r_{dev}$. The calculation time $t_{cal}^{avg}$ increases as $k_u$ grows. The deviation distance $r_{dev}$ decreases with $k_u$ when $k_u < 0.35$, but it stays almost the same, when $k_u > 0.35$.}
    \label{fig:simulation statistic}
\end{figure}

A more detailed comparison is illustrated in Fig. \ref{fig:simulation metrics}.
As shown in Fig. \ref{fig:simulation metrics}(a), the calculation time $t_{cal}^{avg}$  of our method is reduced significantly in comparison with the non-heuristic version. Our method is $83$ times faster than the non-heuristic one. Our method only needs to consider the control inputs in the neighborhood around the heuristic solution $\mathbf{u}_g$, so it is possible to avoid evaluating a large number of non-necessary inputs as in the non-heuristic one.  The computational burden of iterative optimization is, therefore, reduced significantly. However, though the search space for inputs is reduced, the control performance keeps almost the same as the non-heuristic one, from the tracking performance $r_{dev}$, the control input $u^{avg}$, and trajectory length $L$ perspectives, as shown in Fig. \ref{fig:simulation metrics}(b)-(d). It demonstrates that $\mathbf{u}_g$ can provide reasonable guidance for robot flocking. The introduction of the heuristic solution $\mathbf{u}_g$ can dramatically reduce the computation cost with almost no sacrifice of the control performance. 
 
It should also be pointed out that our method has advantages over the non-optimal solutions, such as Olfati-saber's method \cite{olfati2006flocking} and the V\'{a}s\'{a}rhelyi's method \cite{vasarhelyi2018optimized}, from a collision-avoidance perspective, as shown in Fig. \ref{fig:simulation metrics}(e)(f). It is fundamentally due to both the finely designed potential energy for collision avoidance and the heuristic solution.  For example, the direction potential $\psi _{od}$ and repulsion potential $\psi _{or}$ ensure that the algorithm has sufficient collision avoidance ability to meet the safety constraints (Fig. \ref{fig:simulation metrics}(e)(f)). The heuristic solution contains damping terms $\mathbf{u}_{gav}$ and $\mathbf{u}_{grv}$ that can reduce the motion oscillations (Fig. \ref{fig:simulation metrics}(d), Fig. \ref{fig:simulation trajectory}). 

By contrast, the non-optimal strategies--- Olfati-saber's method \cite{olfati2006flocking} and the V\'{a}s\'{a}rhelyi's method \cite{vasarhelyi2018optimized},---don't work well in congested challenging scenarios. In essence, they both use the idea of the artificial potential function, so they make reactions based on the current state, making it impossible to foresee the motion of other robots in flocking. Hence, it is impossible to find an optimal solution for either Olfati-saber's method \cite{olfati2006flocking} or the V\'{a}s\'{a}rhelyi's method \cite{vasarhelyi2018optimized}. Furthermore, the resulting performance of the two non-optimal strategies is sensitive to the parameters, making it difficult to balance the performance of all metrics. Our method is an optimal solution that is calculated based on both the current and predicted states of all robots, so its input can ensure improved flocking performance over the existing non-optimal solutions as demonstrated in Fig. \ref{fig:simulation metrics} (b)-(f). It should also be noted that the time efficiency and tracking performance of our method are deeply affected by $k_u$ that dominates the size of the input space for discretization as shown in Fig. \ref{fig:simulation statistic}.
\begin{table}
    \centering
    \caption{Algorithm parameters}
    \begin{threeparttable}
    \setlength{\tabcolsep}{0.8em}{
    \renewcommand\arraystretch{1.3}
    \begin{tabular}{*{6}{l} }
        \toprule
         \multicolumn{2}{c}{Our method} & \multicolumn{2}{c}{Olfati-saber \cite{olfati2006flocking}} & \multicolumn{2}{c}{V\'{a}s\'{a}rhelyi \cite{vasarhelyi2018optimized}} \\
        \midrule
          $r_f(\mathrm{m})$ & 0.421 & $d_{\alpha}(\mathrm{m})$ & 0.421 & $r_0^{rep}(\mathrm{m})$ & 0.421 \\
          $r_s(\mathrm{m})$ & 0.4631 & $r_{\alpha}(\mathrm{m})$ & 0.4631 & $r^{cluster}(\mathrm{m})$ & 0.4631 \\
          $k_a$ & 12 & $d_{\beta}(\mathrm{m})$ & 0.421 & $p^{rep}$ & 3 \\
          $k_t$ & 2 & $r_{\alpha}(\mathrm{m})$ & 0.4631 & $r_0^{frict}(\mathrm{m})$ & 0.3 \\
          $k_n$ & 1.6 & $k_n$ & 1.6 & $k_n$ & 1.6 \\
          $k_{or}$ & 20 & $a$ & 8 & $C^{firct}$ & 0.25 \\
          $k_{od}$ & 10 & $b$ & 10 & $v^{frict}(\mathrm{m/s})$ & 0.1 \\
          $k_{\delta}$ & 0.5 & $h$ & 0.4 & $p^{frict}$ & 0.3 \\
          $\delta$ & 0.3 & $c_1^{\alpha}$ & 1 & $a^{frict}(\mathrm{m/s^2})$ & 0.2 \\
          $k_{\rho}$ & 2 & $c_2^{\alpha}$ & 1 & $r_0^{shill}(\mathrm{m})$ & 0.1 \\
          $k_{rp}$ & 5 & $c_1^{\beta}$ & 1.5 & $v^{shill}(\mathrm{m/s})$ & 0.1 \\
          $k_{rv}$ & 15 & $c_2^{\beta}$ & 1 & $p^{shill}$ & 0.3 \\
          $k_{av}$ & 40 & $c_1^{\gamma}$ & 0.4 & $a^{shill}(\mathrm{m/s^2})$ & 0.2 \\
          $k_{rv}^{'}$ & 0.1 & \multicolumn{2}{c}{Our method} & \multicolumn{2}{c}{Our method} \\
          \cline{3-6}
          $k_{ob}$ & 10 & $k_{\theta}$ & 12 & $k_{u}$ & 0.2 \\
          $n_u$ & 2 & $k_{\phi}$ & 12  \\
        \bottomrule
    \end{tabular}
    }
    \end{threeparttable}
    \label{tab:Algorithm parameters setup}
\end{table}

Additionally, we conducted scalability experiments using the challenging scenario depicted in Fig \ref{fig:simulation trajectory}, which includes open/narrow areas and static/dynamic obstacles, making it sufficiently representative. We tested with two sets of UAVs, increasing the number to 2, 4, 6, and 8. The setup for $n=2$ is the same as for $n=4$, shown in Fig \ref{fig:simulation trajectory}. The configurations for $n=6$ and $n=8$ have some different parameters: 
$n=6$: $k_n = 1.65$, $k_{rp} = 25$, $k_{rv} = 17$; $n=8$: $k_a = 18$, $k_n = 1.7$, $k_{or} = 18$, $k_{od} = 12$, $k_{rp} = 26$, $k_{rv} = 20$, $k_{ob} = 18$. The flocking performance is shown in Table \ref{tab:control_performence_with_variable_number}, where $n$ denotes the number of UAVs in each group. Since the simulation runs on a single computer, $t_{cal}^{avg}$ refers to the total time taken by all robots to compute the control commands at each moment. It naturally increases with the number of robots, but it is an acceptable computational burden. In conclusion, due to the introduction of the heuristic solution $\mathbf{u}_g$ and the multi-layer collision avoidance mechanism, the overall performance of the swarm does not significantly decline as the number of robots increases. The scalability of our method is well behaved.
\begin{table}
    \centering
    \caption{Control performence for defferent numbers of robots}
    \begin{threeparttable}
    \setlength{\tabcolsep}{1.2em}{
    \renewcommand\arraystretch{1.5}
    \begin{tabular}{*{5}{l} }
        \toprule
        & $n = 2$ & $n = 4$ & $n = 6$ & $n = 8$ \\
        \midrule
        $t_{cal}^{avg} (\mathrm{s})$ & 0.0116 & 0.0375 & 0.0563 & 0.0748 \\
        $r_{static}^{min} (\mathrm{m})$ & 0.2297 & 0.1600 & 0.1550 & 0.1918  \\
        $r_{dynamic}^{min} (\mathrm{m})$ & 0.3778 & 0.2793 & 0.2809 & 0.2574 \\
        $r_{dev}^{avg} (\mathrm{m})$ & 0.2198 & 0.2359 & 0.2187 & 0.2321 \\
        $u^{avg} (\mathrm{m/s^2})$ & 0.0616 & 0.0697 & 0.0758 & 0.0698 \\
        $L^{avg} (\mathrm{m})$ & 18.0702 & 18.1798 & 18.6360 & 18.7603 \\
        \bottomrule
    \end{tabular}
    }
    \end{threeparttable}
    \label{tab:control_performence_with_variable_number}
\end{table}

\subsection{Real Experiment}\label{subsec:Real Experiment}
The experimental scenario is a simplified version of the simulation to validate the performance of the proposed method in real physical systems. We implement our algorithm on four DJI Tello drones that are divided into two groups. Each group with two drones is required to pass through two door frames as shown in Fig. \ref{fig:real_trajectory}. The safety radius of the tello drone is $0.1 \ \mathrm{m}$, so we have $r_c = r_{\beta} = 0.1 \ \mathrm{m}$. Choose $v_{max} = 0.24 \ \mathrm{m/s}$, $u_{max} = 0.75 \ \mathrm{m/s^2}$, $\triangle t = 0.02 \ \mathrm{s}$, $H = 0.06 \ \mathrm{s}$. In the experiment, we choose $r_f=0.7033$ $\mathrm{m}$, $r_s = 0.7736$ $\mathrm{m}$, $k_u=0.3$, $k_n=1.2$, $k_a=24$, and $k_{rp}=28$, while other parameters are the same as those in the first column of Table \ref{tab:Algorithm parameters setup}. 

The experiment results are provided in Fig. \ref{fig:snapshots_of_experiment}-\ref{fig:real_metric_static_dynamic_obstacle}. 
It is worth noting that the Olfati-saber's and the V\'{a}s\'{a}rhelyi's methods failed to be implemented due to Tello's internal dynamics, which generates around $0.3 \sim 0.4 \ \mathrm{s}$ time delay, as well as the environment congestion. The non-heuristic version failed to be implemented because it can not match the real-time requirement. As a result, real-world experimental comparisons cannot be conducted like the way in simulation. We are only capable of drawing the robot-obstacle distance. In three-dimensional space, two swarms of drones meet at the same altitude, at which point the drone performs a collision avoidance maneuver to effectively avoid the dynamic obstacles as illustrated in Fig. \ref{fig:snapshots_of_experiment}(c). As shown in Fig. \ref{fig:real_metric_static_dynamic_obstacle}, the two groups of drones can safely move through the two openings by the proposed algorithm with a considerable safety margin.
\begin{figure}[t]
    \centering
    \includegraphics[width = 1\linewidth]{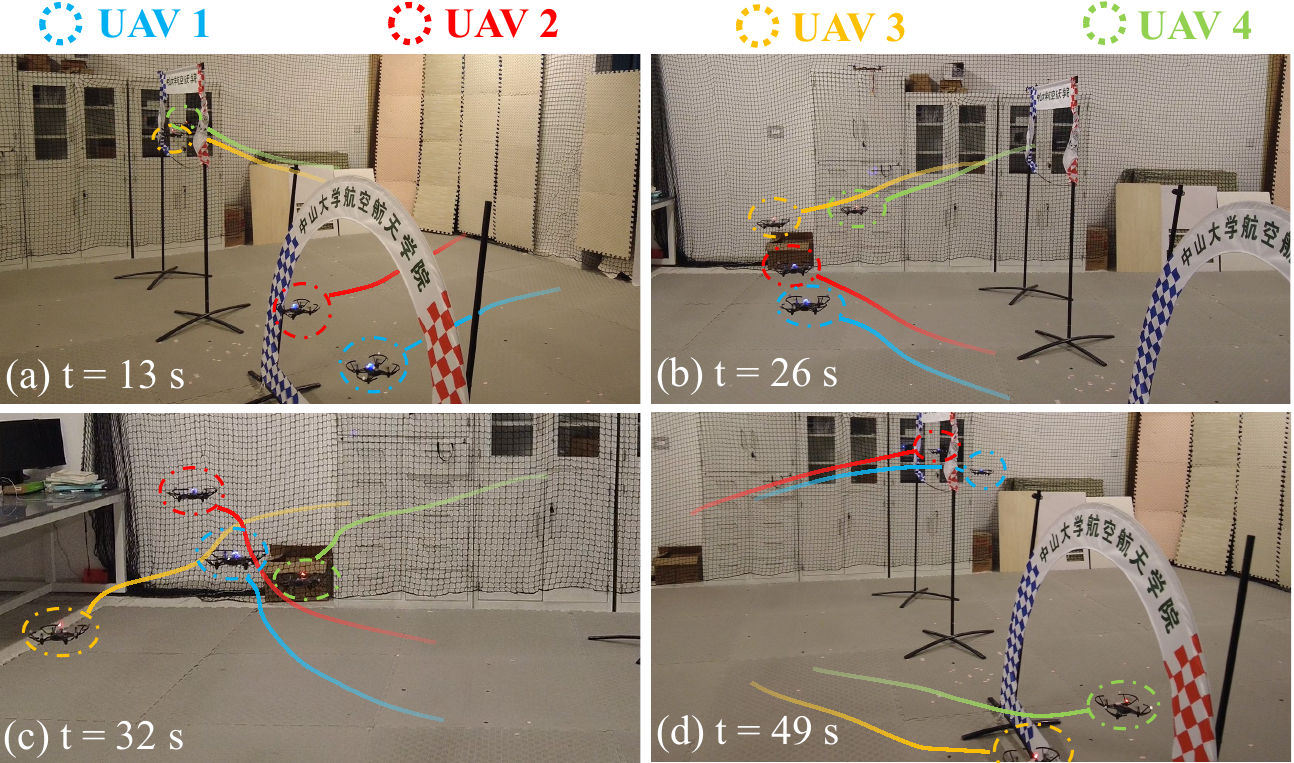}
    \caption{Snapshots of the experiment. The collision avoidance maneuvers occur at $t = 26 \ \mathrm{s}$ - $32 \ \mathrm{s}$.}
    \label{fig:snapshots_of_experiment}
\end{figure}
\begin{figure}[t]
    \centering
    \includegraphics[width = 1\linewidth]{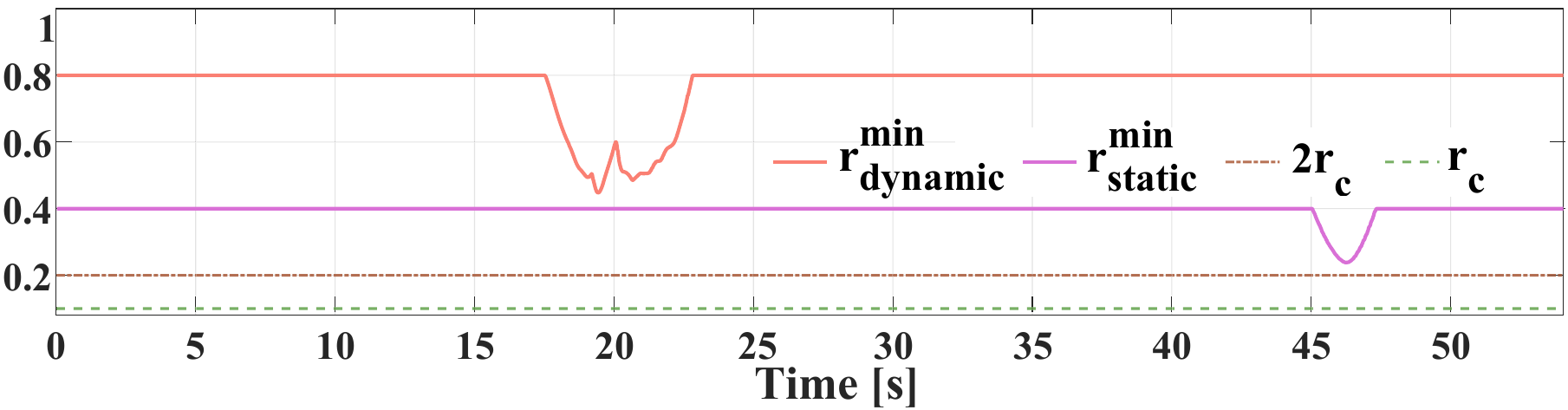}
    \caption{ Distance metrics $r_{static}^{min}$ and $r_{dynamic}^{min}$. Only the cases of $r_{dynamic}^{min} \leq 0.8$ $\mathrm{m}$, $r_{static}^{min} \leq 0.4$ $\mathrm{m}$ are displayed. The minimum allowed distance for $r_{dynamic}^{min}$ is 0.2 m.}
    \label{fig:real_metric_static_dynamic_obstacle}
\end{figure}
	
\section{Conclusions} \label{sec:Conclusion}
In this paper, a heuristic predictive control solution was developed for multi-robot flocking in dynamic challenging environments. In the proposed work, the GRF was employed to characterize the spatial interactions among robots in flocking. Based on the GRF, the control objective was transformed into finding a MAP distribution of a robot flock at a certain future time instant. The MAP distribution corresponds to the spatial configuration with minimum energies. The GRF-based optimal control was resolved via control space discretization. A heuristic solution was introduced as an initial guess for the local discretization of control space. The numerical simulations demonstrated the introduction of the heuristic solution could significantly increase the computation efficiency with little sacrifice of control performance. In addition, a new collision avoidance mechanism was designed, which could result in better collision avoidance performance as shown in the simulations. Theoretical analysis was provided to demonstrate the feasibility of the heuristic solution and convergence of the proposed algorithm.  Eventually, both simulations and experiments were performed to verify the efficiency of the proposed design. 

However, there are still some shortcomings, the ideal model and perception accuracy are assumed, and the algorithm parameters are hand-tuned, which are not conducive to the practical implementation. These are our future research directions.

\appendix

\subsection{Proof of Theorem 1}\label{subsec:derive_updated_rule_proof}
\emph{Proof}: The goal is to seek an approximate distribution $q(X)$ $ = \prod _{i} q_i(X_i)$ which is close to the true distribution $p(X)$ in terms of the KL-divergence $\mathbf{D}(q|p)$. But $\mathbf{D}(q|p)$ cannot be directly minimized due to the fact that $p(X)$ needs to be approximated. Note that $\ln p(X) = \sum _{\mathcal{Q} \in \mathcal{C}} \ln \varPsi _{\mathcal{Q}} (X_\mathcal{Q}) - \ln Z$ and the KL-divergence can be denoted as 
\begin{equation}
    \mathbf{D}(q|p) = \mathbb{E}_{q(X)}\left [\ln q(X) \right] - \mathbb{E} _{q(X)}\left [\sum _{\mathcal{Q} \in \mathcal{C}} \ln \varPsi _{\mathcal{Q}} (X_\mathcal{Q}) \right ] + \ln Z
\end{equation}
where $\mathbb{E}_{q(X)}[\cdot]$ is expectation operation over distribution $q(X)$. Importantly, the term $\ln Z$ does not depend on $q(X)$. Let $F(p,q) = \ln Z - \mathbf{D}(q|p)$, so $\arg_{q(X)} \min \mathbf{D}(q|p)$ is converted to $\arg_{q(X)} \max F(p,q)$. As shown below, $F(p,q)$ can be optimized efficiently by exploiting the structure of $p(X)$. In the following, the Lagrange multiplier method is used to derive a characterization of the stationary points of $F(p,q)$.

Consider the restriction of the objective $F(p,q)$ to those terms that involve $q_i(X_i)$. According to $\sum _{\mathbf{x}_i}q_i(\mathbf{x}_i) = 1$, the Lagrangian can be defined as 
\begin{equation}
    \begin{aligned}
    L_i(p,q) &= \sum _{\mathcal{Q} \in \mathcal{C}}\mathbb{E}_{q(X)}[ \ln \varPsi _{\mathcal{Q}} (X_\mathcal{Q}) ] - \mathbf{H}_{q(X)}(X_i) \\
        &+ \lambda  (\sum _{\mathbf{x}_i}q_i(\mathbf{x}_i) - 1 )
    \end{aligned}
\end{equation}
where $\mathbf{H}_{q(X)}(X_i) = \sum _{\mathbf{x}_i}q_i(\mathbf{x}_i)\ln q_i(\mathbf{x}_i) + \sum _{\mathbf{x}_j \in \bm{X}_{-i}}q_j(\mathbf{x}_j)$ $\ln q_j(\mathbf{x}_j)$, and $\bm{X}_{-i} = X \setminus \{X_i\}$. The derivative with respect to $q_i(\mathbf{x}_i)$ is 
\begin{equation}
    \frac{\partial L_i}{\partial q_i(\mathbf{x}_i)} = \sum _{\mathcal{Q} \in \mathcal{C}} \mathbb{E}_{q(X)}[ \ln \varPsi _{\mathcal{Q}} (X_\mathcal{Q})|\mathbf{x}_i ] - \ln q_i(\mathbf{x}_i) -1 + \lambda
\end{equation}
Then, set it to $0$ and rearrange the terms, we have
$\ln q_i(\mathbf{x}_i) = \lambda - 1 + \sum _{\mathcal{Q}\in \mathcal{C}} \mathbb{E}_{q(X)}[ \ln \varPsi _{\mathcal{Q}} (X_\mathcal{Q})|\mathbf{x}_i ]$.
Take exponents of both sides and renormalize. Because $\lambda$ is a constant relative to $q_i(X_i)$, it will drop out in the renormalization. Therefore, we have
\begin{equation}
    q_i(\mathbf{x}_i) = \frac{1}{Z_i}\exp{\left ( \sum _{\mathcal{Q}\in \mathcal{C}} \mathbb{E}_{q(X)}[ \ln \varPsi _{\mathcal{Q}} (X_\mathcal{Q})|\mathbf{x}_i ] \right )}
\end{equation}
Note that if $X_i \notin X_{\mathcal{Q}}$, there exists $\mathbb{E}_{q(X)}[ \ln \varPsi _{\mathcal{Q}} (X_\mathcal{Q})|\mathbf{x}_i ] = \mathbb{E}_{q(X)}[ \ln \varPsi _{\mathcal{Q}} (X_\mathcal{Q})]$. The expectation terms on the factors which do not contain $X_i$ are independent of $X_i$'s value, they will also be absorbed into $Z_i$. Thus, we have
\begin{equation}\label{eq:qi_xi}
    q_i(\mathbf{x}_i) = \frac{1}{Z_i}\exp{\left ( \sum _{\mathcal{Q}_i} \mathbb{E}_{q'(\mathbf{X}')}[ \ln \varPsi _{\mathcal{Q}_i} (\mathbf{X}',\mathbf{x}_i) ] \right )}
\end{equation}
where $\mathbf{X}' = X_{\mathcal{Q}_i} \setminus \{X_i\}$ and $\mathcal{Q}_i$ is the clique containing $X_i$. Formula (\ref{eq:qi_xi}) is a stationary point of $F(p,q)$. And note that $\sum _{\mathcal{Q} \in \mathcal{C}} \mathbb{E}_{q(X)}\left[ \ln \varPsi _{\mathcal{Q}} (X_\mathcal{Q}) \right]$ is linear in $q_i(X_i)$, $-\mathbf{H}_{q(X)}(X_i)$ is a concave function in $q_i(X_i)$. As a whole, $F(p,q)$ is a concave function in $q_i(X_i)$ and therefore has a unique global optimum. As can be seen, $F(p,q)$ gets the optimal value relative only to a single coordinate $q_i(X_i)$ given the choice of all other marginals. Thus to optimize $F(p,q)$ in its entirety, it can be optimized relative to all of the coordinates. Consequently, (\ref{eq:qi_xi}) can be the update criteria.

Each iteration of the criteria is monotonically nondecreasing in $F(p,q)$ and $F(p,q)$ is bounded ($\leq \ln Z$). Hence, the sequence of distributions must converge with any initial values. The $F(p,q)$'s convergence also implies that the KL-divergence reaches the minimum. 

\subsection{Proof of Theorem 2}\label{subsec:initial_solution_proof}
\emph{Proof}: 
A Lyapunov function candidate is chosen as
\begin{equation}\label{eq:Lyapunov function}
    V = \frac{1}{2}\sum _{i = 1}^{N} \left(\sum _{j \in \mathcal{N}_i} \psi _{ar} + 2\psi _{rp} + \mathbf{v}_{ir}^T\mathbf{v}_{ir}\right)
\end{equation}
Let $\tilde{\mathbf{p}}_{i} = \mathbf{p}_{ir}$, $\tilde{\mathbf{v}}_{i} = \mathbf{v}_{ir}$, so the control input $\mathbf{u}_{g,i}^0$ is
\begin{equation}
    \mathbf{u}_{g,i}^0 = - \sum _{j \in \mathcal{N}_i} \left(\nabla _{\tilde{\mathbf{p}}_{i}} \psi _{ar} + k_{av}\tilde{\mathbf{v}}_{ij}\right) - \nabla _{\tilde{\mathbf{p}}_i} \psi _{rp} - k_{rv}^{'}\tilde{\mathbf{v}}_i
\end{equation}
Differentiating  (\ref{eq:Lyapunov function}) with respect to time yields
\begin{equation}
    \begin{aligned}
    \dot{V} &= -\sum _{i = 1}^{N}\sum _{j \in \mathcal{N}_i} k_{av} \tilde{\mathbf{v}}_i^T\tilde{\mathbf{v}}_{ij} - \sum _{i = 1} ^{N} k_{rv}^{'} \tilde{\mathbf{v}}_i^T \tilde{\mathbf{v}}_i \\
    &= -\tilde{\mathbf{v}}^T((k_{av}\mathbf{L} + k_{rv}^{'}\mathbf{I}_3) \otimes \mathbf{I}_3)\tilde{\mathbf{v}} \leq 0
    \end{aligned}
\end{equation}
where $\mathbf{L}$ is the Laplacian matrix of $\mathcal{G}$. Due to the fact that $\mathbf{L}$ is positive semi-definite and symmetric \cite{olfati2006flocking}. Thus $k_{av}\mathbf{L} + k_{rv}^{'}\mathbf{I}_3$ is positive definite and symmetric, $\dot{V} \leq 0$ is always true. So the non-negative function satisfies $V(t) \leq V(0)$ for $t > 0$. Note that $\psi _{ar}, \psi _{rp} \geq 0$, thus inequality $\tilde{\mathbf{v}}_i^T\tilde{\mathbf{v}}_i \leq V(t) \leq V(0)$ holds, which implies $\Vert \tilde{\mathbf{v}}_i \Vert \leq \sqrt{2V(0)}$. $V(0)$ is supposed to be finite, so $\Vert \tilde{\mathbf{v}}_i \Vert \leq \sqrt{2V(0)}$ is bounded. By assumption, $\mathcal{G}$ is connected, hence, any two robots are connected by a path, which guarantees the distance between the two robots is limited. There exists $\Vert \tilde{\mathbf{p}}_{ij} \Vert \leq (N - 1)r_s,\ \forall i,j \in \{1,\ldots, N\}, i \neq j$.
Therefore, the invariant set $\Omega = \{(\tilde{\mathbf{p}}_{ij},\tilde{\mathbf{v}}_i) \vert V(t) < V(0),t \geq 0 \}$ is compact. According to Lasalle's principle, any state starting from set $\Omega$ will asymptotically converge to the largest invariant set in $R = \{(\tilde{\mathbf{p}}_{ij},\tilde{\mathbf{v}}_i) \vert \dot{V}(t) = 0,t \geq 0\}$. Note that $\dot{V} = 0$ is only true at $\tilde{\mathbf{v}} = \mathbf{0}$, \emph{i.e.}, $\mathbf{v}_1 = \ldots = \mathbf{v}_{N} = \mathbf{v}_r$ holds in the steady state. Hence, the velocity of each robot asymptotically converges to the same value $\mathbf{v}_r$. 

\subsection{Proof of Proposition 1}\label{subsec:graph_connected_proof}
\emph{Proof}: This proposition can be proved by contradiction. Assuming $\mathcal{G}$ is not connected, \emph{i.e.}, robot $i$ has no interactions with other robots and it can only be driven by $\mathbf{u}_{grp,i}$ and $\mathbf{u}_{grv,i}$. In fact, the two terms constitute a PD controller, and robot $i$ coincides with the leader when it reaches a steady state, \emph{i.e.}, each robot will approach the leader without the impact of neighboring robots. But there will always be a situation of $\Vert \mathbf{p}_{ij} \Vert \leq r_s$ that is contradictory with the hypothesis. Hence, $\mathcal{G}$ will always tend to be connected. 

\bibliographystyle{IEEEtran}
\bibliography{IEEEabrv,main}

\begin{IEEEbiography}[{\includegraphics[width=1in,height=1.25in,clip,keepaspectratio]{./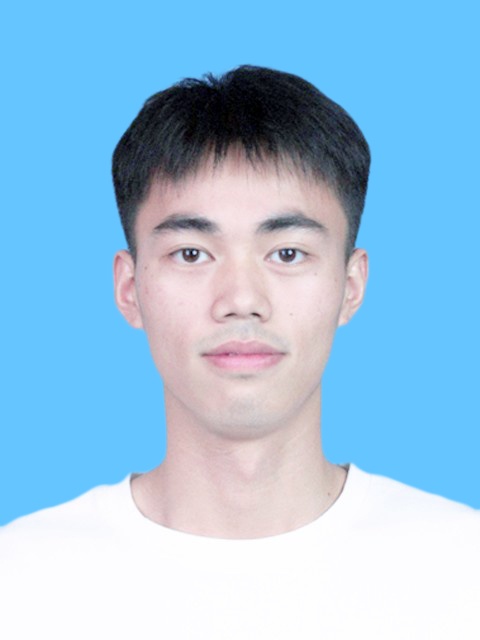}}]{Guobin Zhu} received his B.S. degree in the School of Aeronautics and Astronautics, Sun Yat-sen University, Shenzhen, China, in 2023. He is currently a doctoral candidate at the School of Automation and Electrical Engineering, Beihang University, Beijing, China. His research interests include reinforcement learning, multi-robot/agent systems, and unmanned aerial vehicle systems.
\end{IEEEbiography}

\begin{IEEEbiography}[{\includegraphics[width=1in,height=1.25in,clip,keepaspectratio]{./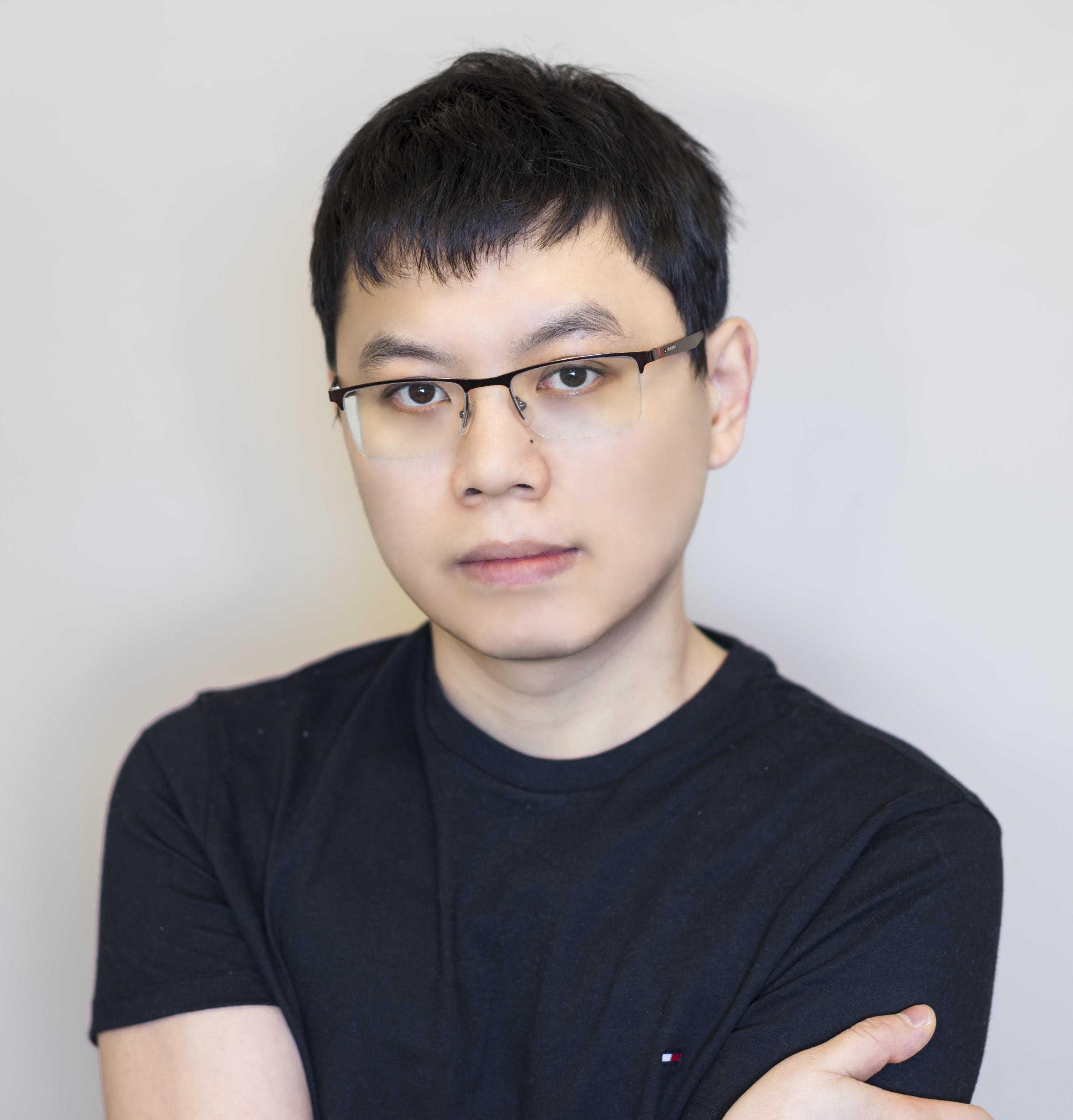}}]{Qingrui Zhang} (S'17-M'20) received his B.S. degree in automatic control from Harbin Institute of Technology, Harbin, China, in 2013, and Ph.D. degree in Aerospace Science and Engineering from University of Toronto, Toronto, ON, Canada, in 2019. From 2019 to 2020, he was a postdoctoral research fellow at Delft University of Technology (TU Delft), Delft, the Netherlands. Since 2020, he has been with the School of Aeronautics and Astronautics, Sun Yat-sen University, Shenzhen, China, where he is currently an Associate Professor. He was the recipient of the Gordon N. Patterson Student Award for the top Ph.D. graduate from the University of Toronto Institute for Aerospace Studies (UTIAS), in 2019.  His research interests include reinforcement learning, learning/model-based optimal control, multi-robot/agent systems, and unmanned aerial vehicles.
\end{IEEEbiography}

\begin{IEEEbiography}[{\includegraphics[width=1in,height=1.25in,clip,keepaspectratio]{./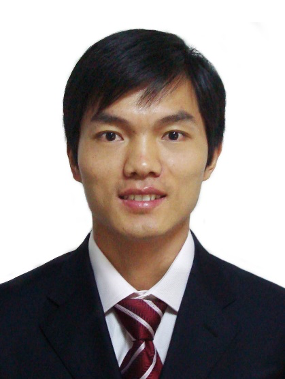}}]{Bo Zhu} received the B.E. and Ph.D. degrees both from Beihang University, Beijing, China, in 2004 and 2010, respectively. He was a Lecturer and then promoted to an Associate Professor in July 2013, with the School of Aeronautics and Astronautics, University of Electronic Science and Technology of China (UESTC), Chengdu, China. He visited the FSC Lab, University of Toronto Institute for Aerospace Studies (UTIAS), Toronto, ON, Canada, from 2013 to 2014. Since March 2019, he has been an Associate Professor and Ph.D. supervisor with the School of Aeronautics and Astronautics, Sun Yat-sen University (SYSU), Guangzhou, China. His research interests include uncertainty and disturbance estimation (UDE) theory and technique; and autonomous, dependable, and affordable swarm systems (ADA-SS).
\end{IEEEbiography}

\begin{IEEEbiography}
[{\includegraphics[width=1in,height=1.25in,clip,keepaspectratio]{./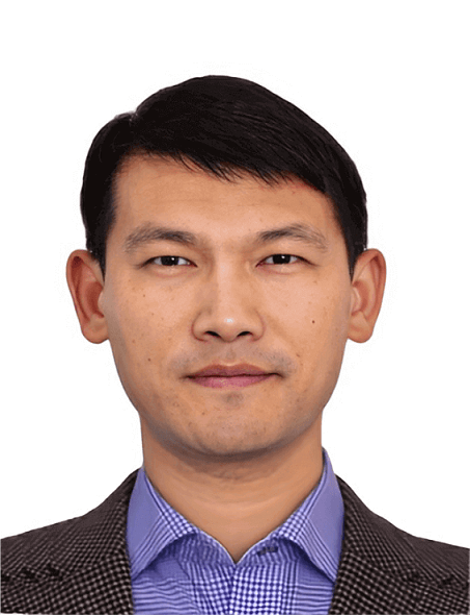}}]{Tianjiang Hu} (Member, IEEE) received the B.Eng. and Ph.D. degrees in robotics and automatic control from the National University of Defense Technology, Changsha, China, in 2002 and 2009, respectively. He was a joint Ph.D. candidate with the Nanyang Technological University, Singapore (2007–2008), financially supported by the Chinese Scholarship Council.

Dr. Hu is currently a Full Professor with Sun Yat-sen University, China where he has served as the founder of Machine Intelligence and Collective Robotics (MICRO) Lab. He has also been a Visiting Scholar for international collaboration with Nanyang Technological University, Singapore, and the University of Manchester, U.K. He has published over 20 technical papers in refereed international journals and academic conference proceedings. His current research interests include collective behavior, bio-inspired robotics, autonomous systems, and learning control.

Dr. Hu received the Best Paper Award at the International Conference
on Realtime Computing and Robotics (RCAR 2015). He has served as the
General Chair of the inaugural 2021 International Conference on Swarm
Intelligence and Collective Robotics (SICRO).
\end{IEEEbiography}
\end{document}